\documentclass{article}

\usepackage[preprint]{neurips_2026}

\usepackage[utf8]{inputenc} 
\usepackage[T1]{fontenc}    
\usepackage{hyperref}       
\usepackage{url}            
\usepackage{amsmath,amssymb}
\usepackage{booktabs}       
\usepackage{amsfonts}       
\usepackage{nicefrac}       
\usepackage{microtype}      
\usepackage{xcolor}         
\usepackage{graphicx,verbatim}
\usepackage{algorithm}
\usepackage{enumitem}
\usepackage{algorithmic}
\usepackage{bbm}
\usepackage{booktabs}
\usepackage{multirow}
\usepackage{amsthm}
\usepackage{booktabs}
\usepackage{wrapfig}

\newtheorem{theorem}{Theorem}[section]
\newtheorem{proposition}[theorem]{Proposition}
\newtheorem{lemma}[theorem]{Lemma}

\newtheorem{assumption}[theorem]{Assumption}

\title{Topology-Driven Transferability Estimation for 3D Medical Vision Foundation Models}

\author{%
  Jiaqi Tang$^{1,3,6}$\thanks{These authors contributed equally to this work.}
  \and
  Shaoyang Zhang$^{1}$\footnotemark[1]
  \and
  Fandong Zhang$^{4}$\footnotemark[1]
  \and
  Shu Zhang$^{4}$
  \and
  Yang Liu$^{7}$
  \and
  Qingchao Chen$^{1,3,6}$\thanks{Corresponding author: \texttt{qingchao.chen@pku.edu.cn}} \\
  $^{1}$National Institute of Health Data Science, Peking University, Beijing, China \\
  $^{3}$Institute of Medical Technology, Peking University, Beijing, China \\
  $^{4}$Deepwise Co., Ltd. \\
  $^{6}$State Key Laboratory of General Artificial Intelligence, Peking University, Beijing, China \\
  $^{7}$Wangxuan Institute of Computer Technology, Peking University, Beijing, China
}

\begin{document}

\maketitle

\begin{abstract}
The growing number of medical vision foundation models highlights the need for effective model selection. However, mainstream selection methods rely on exhaustive fine-tuning, which is computationally expensive. Most of the existing Transferability Estimation (TE) metrics are primarily designed for image-level classification. Their reliance on parametric assumptions or global statistical aggregations fails to preserve spatial relationships and fine-grained boundary details, which are crucial for the segmentation task. Additionally, while image-level tasks typically process a single feature vector per input, dense prediction tasks in 3D medical imaging require voxel-wise evaluation against dense annotations. To bridge these gaps, we propose a \textit{non-parametric, topology-driven} framework that estimates transferability directly from the alignment between the sparse 1-skeleton graph of dense features and semantic labels via Minimum Spanning Trees (MST). We decouple the alignment into two complementary geometric scales: Local Boundary-Aware Topological Consistency (LBTC) to assess boundary separability, where we rigorously prove that the MST leakage rate serves as a finite-sample lower bound on the Bayes error; and Global Representation Topology Divergence (GRTD) to evaluate the overall anatomical layout. Crucially, we formally justify a counterintuitive mechanism: Although without fine-tuning, the randomly initialized segmentation decoder acts as a topology-preserving spatial projector, reducing the variance of pairwise distance estimates and stabilizing global alignment evaluation. Fused via a task-adaptive gating mechanism, these dual metrics adapt to diverse clinical complexities. Evaluated on a large-scale benchmark of 114,000 3D medical volumes across diverse anatomical tasks, our topological framework achieves state-of-the-art transferability estimation with an average weighted Kendall's $\tau$ (outperforming the best baseline by 0.36) while accelerating evaluation by 56\(\times\).
\end{abstract}

\section{Introduction}

The rise of self-supervised learning (SSL) has produced a growing zoo of 3D medical vision foundation models~\cite{chen2020simple,zhou2021models,He2022MAE,chen2023masked,wu2024voco,Tang2022SwinUNETR}. However, as shown in Figure~\ref{fig:teaser} (Left), under the evaluation setup of fine-tuning in the model zoo, no single encoder universally dominates across diverse downstream segmentation tasks that span various anatomical regions (e.g., brain, abdomen, kidney, head and neck) and imaging modalities (MRI and CT)~\cite{wald2025openmind}. This diversity creates a simple but computationally brutal question: given a new segmentation task and several pre-trained encoders, which one should be selected for fine-tuning? Since exhaustive fine-tuning over the full model zoo is prohibitively expensive~\cite{wald2025openmind}, there is \textit{a pressing need for a training-free Transferability Estimation (TE) method capable of ranking models based solely on their features and the segmentation annotations.}

\begin{wrapfigure}{r}{0.5\textwidth}
    \centering
    \includegraphics[width=0.5\textwidth]{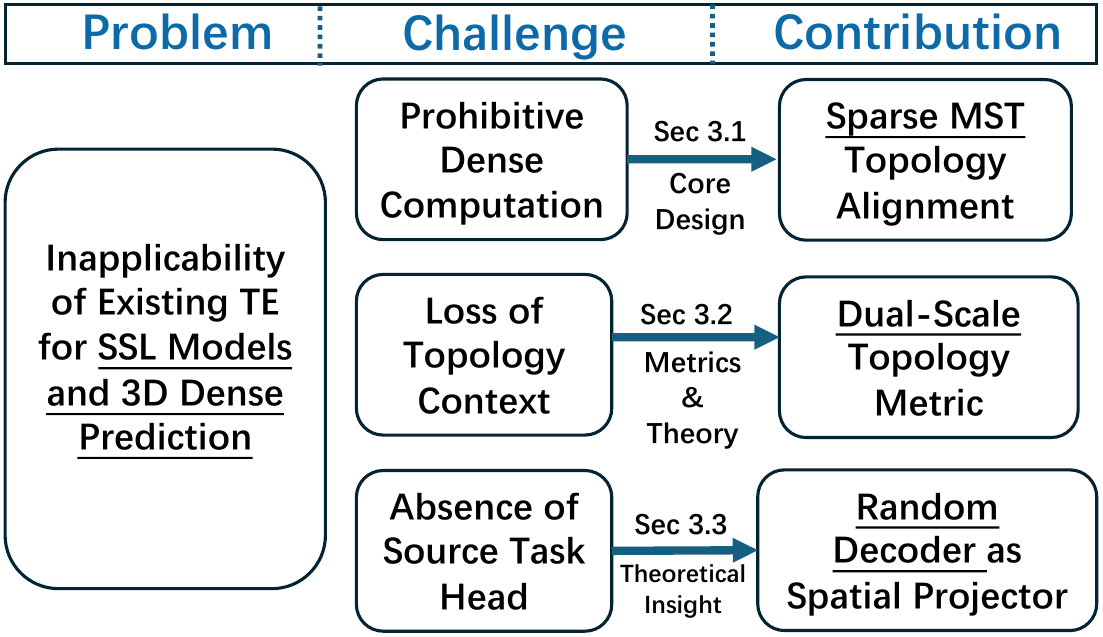}
    \vspace{-0.6cm}
    \caption{Motivation and core contributions of our topology-driven TE framework.}
    \vspace{-0.3cm}
    \label{fig:mindmap}
\end{wrapfigure}

Two main lines of existing transferability metrics are primarily designed for image classification tasks: (1) label-comparison metrics that evaluate the alignment between source and target label distributions~\cite{tran2019transferability,nguyen2020leep}, and (2) embedding-based metrics that assess the compatibility between pre-trained features and target labels~\cite{you2021logme,pandy2022transferability,bao2019information,huang2022frustratingly}.However, applying these metrics to SSL-based 3D medical foundation models is fundamentally problematic, as illustrated in Figure~\ref{fig:mindmap}.
First, \textbf{the absence of a source classification head in SSL models creates an architectural mismatch.} Previous label comparison-based methods, such as LEEP~\cite{nguyen2020leep} rely on a pre-trained classification head to compute. However, most medical foundation models are trained under SSL methods, providing no such task head that is consistent with the downstream task.
At the same time, \textbf{embedding-based metrics rely on global statistical assumptions that fail to capture the fine-grained spatial dependencies crucial for 3D segmentation.} Metrics like LogME~\cite{you2021logme}, GBC~\cite{pandy2022transferability} typically evaluate feature-label alignment by fitting simple parametric models (e.g., Gaussians). Designed for image-level classification, these methods treat features as independent samples or aggregate them across the entire image. When applied to 3D medical volumes, this global aggregation discards the spatial relationships between voxels, ignoring the anatomical structure and boundary details that are essential for the segmentation task, as shown in Figure~\ref{fig:teaser}(right).
Furthermore, \textbf{applying these metrics to 3D segmentation is computationally prohibitive.} A single medical volume contains millions of voxels with highly imbalanced classes. Computing pairwise feature-label similarities for such massive amounts of data requires excessive time and memory, making it impractical for evaluating a large pool of candidate models.
These observations motivate a paradigm shift specifically tailored to SSL-based 3D vision foundation models for dense prediction. 
Instead of relying on any architectural consistency or parametric assumptions, we propose a \textit{non-parametric, topology-driven framework that can be computed directly from an SSL-based encoder paired with a randomly initialized segmentation decoder}, without any reliance on a source task head.

\begin{figure*}[t]

    \includegraphics[width=0.32\linewidth, height=0.37\linewidth]{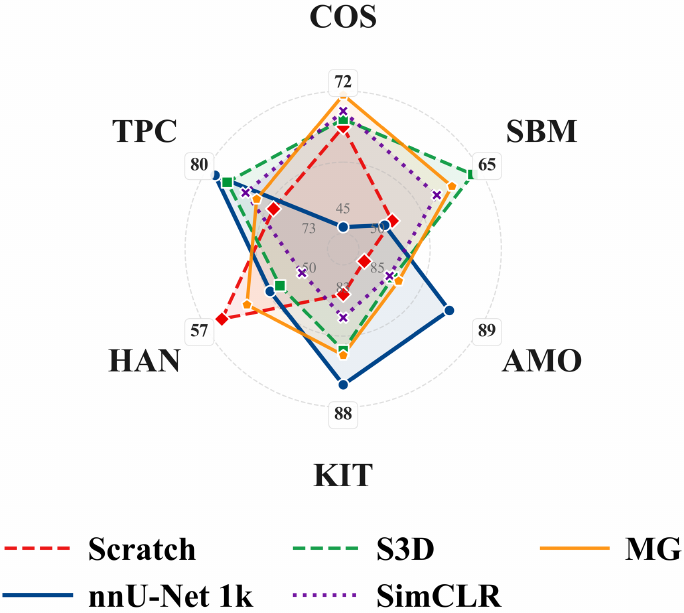} 
    \includegraphics[width=0.67\linewidth, height=0.37\linewidth]{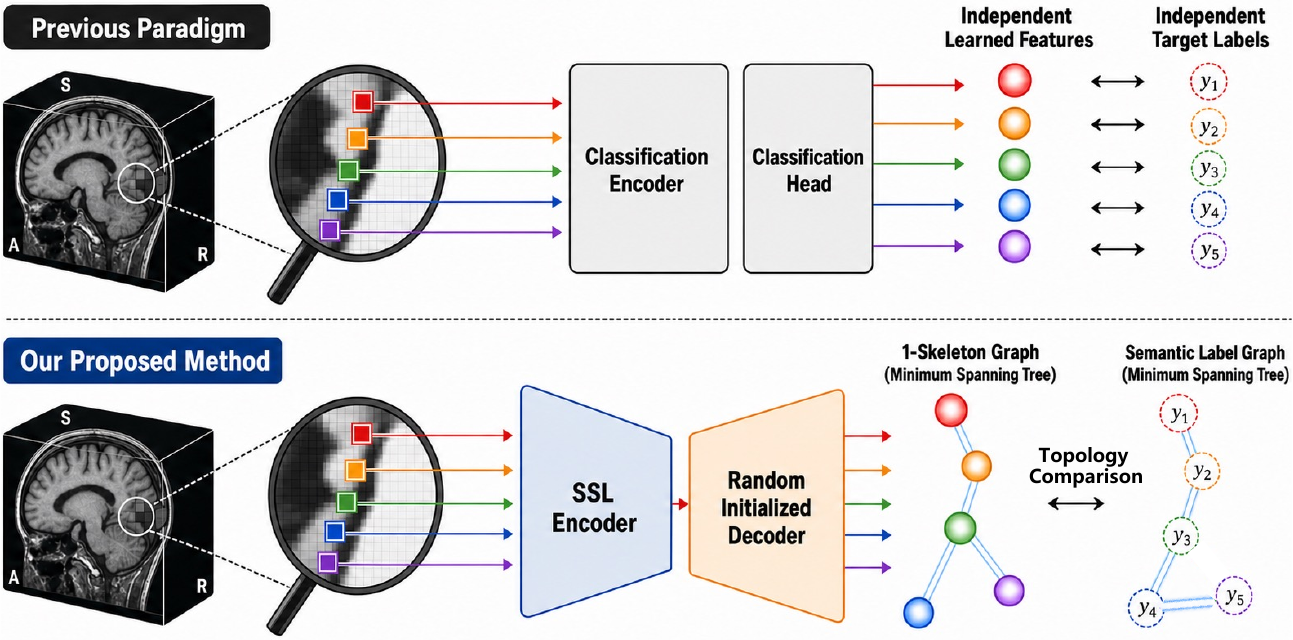} 

    \caption{ \textbf{Left:} The radar chart evaluates models on datasets with diverse anatomical regions and modalities: including COS~\cite{chen2022carotid} (carotid vessel, MRI), SBM~\cite{grovik2020deep} (brain, MRI), AMO~\cite{ji2022amos} (abdominal organs, CT/MRI), KIT~\cite{heller2021state} (kidney, CT), HAN~\cite{podobnik2023han} (head and neck, MRI), and TPC~\cite{yang2024topcow} (brain vessels, MRI). Fine-tuning performance of diverse foundation models from~\cite{wald2025openmind} varies across downstream datasets. \textbf{Right:} Previous methods treat dense voxels as independent samples and loses essential spatial context. Our topology-driven approach captures the structural geometry of the feature space. By constructing a Minimum Spanning Tree (MST), we estimate transferability by aligning the feature topology with the semantic label graph. }
    \label{fig:teaser}
    \vspace{-4mm}
\end{figure*}

To be specific, as shown in Figure~\ref{fig:teaser} (right-bottom), we leverage the Minimum Spanning Tree (MST) to extract a sparse 1-skeleton graph structure from the dense feature space and compare its alignment with corresponding semantic labels MST, bypassing statistical assumptions while reducing the computational overhead of dense voxel evaluation. Furthermore, recognizing that medical segmentation hinges on both local boundary precision and global anatomical layouts, we decouple our MST-based evaluation into two complementary geometric scales to compare with the semantic label.
\textbf{For local boundary separability}, we compute Local Boundary-Aware Topological Consistency (LBTC) by zooming into the transition zones between adjacent tissues. We calculate the Topological Leakage Rate as the fraction of MST edges that cross different semantic classes, where we rigorously prove that this leakage rate serves as a finite-sample lower bound on the Bayes error. \textbf{For global structural layout}, we propose the Global Representation Topology Divergence (GRTD) metric to evaluate how well features from different organs are separated into distinct clusters on a macro scale. Finally, \textit{fused via a task-complexity-driven gating function}, these metrics adapt from focal lesions to multi-organ layouts, avoiding dense-regime computational bottlenecks while providing a comprehensive, training-free model ranking.
Crucially, we reveal a counter-intuitive mechanism that is essential for our global metric (GRTD): although without finetuning, passing embeddings through a randomly initialized decoder acts as a topology-preserving spatial projector. We formally justify how this random projection reduces variance while preserving the underlying manifold through a Johnson--Lindenstrauss-style argument~\cite{matouvsek2008variants}.

We validate our framework on a benchmark of SSL foundation models pre-trained on 114{,}000 3D medical volumes and 7 downstream segmentation tasks spanning four anatomical regions and two modalities, under both in-distribution(ID) and cross-region/cross-modality out-of-distribution (OOD) scenarios. Beyond achieving state-of-the-art performance by  a margin of 0.36 in weighted Kendall's $\tau$, our work is, to the best of our knowledge, the first to pioneer transferability estimation for dense segmentation in vision foundation models. 
Our method yields three key advantages: (i) it captures fine-grained spatial topologies via the local branch (LBTC), theoretically proven as a finite-sample lower bound on the optimal Bayes error; (ii) it assesses macro-anatomical layouts alignment through the global branch (GRTD), utilizing a randomly initialized decoder as a spatial projector to stabilize global TE; and (iii) it enables highly efficient computation via sparse minimum spanning trees over feature embeddings.


\section{Related Work}

\subsection{3D Medical Foundation Models}

The rise of large-scale self-supervised learning has yielded a rapidly expanding zoo of 3D medical foundation models, each forged under a distinct pre-training objective that leaves a characteristic geometric imprint on its representations. Reconstruction-based paradigms such as MAE\citep{He2022MAE}, SimMIM \citep{chen2023masked}, GreenMIM \citep{huang2022green}, CXR-MAE \citep{xiao2023delving}, and Models Genesis \citep{zhou2021models} train encoders via masked patch prediction, cultivating fine-grained sensitivity to local anatomical textures.
Contrastive methods, including SimCLR \citep{chen2020simple}, VoCo \citep{wu2024voco}, LVM-Med \citep{mh2023lvm}, learn discriminative geometry through instance- or graph-level positive-negative discrimination.
Self-distillation paradigms including RAD-DINO \citep{perez2025exploring}, DINOv3
\citep{simeoni2025dinov3}, and Phikon-v2 \citep{filiot2024phikon} distill semantic structure without explicit negatives, implicitly shaping class boundary representations.
The algorithmic diversity is further amplified by 3D volumetric architectures expressly designed for spatial complexity: SwinUNETR-v2 \citep{hatamizadeh2021swin}, VISTA3D
\citep{he2025vista3d}, Triad \citep{wang2025triad}, and OmniRad \citep{zedda2026omnirad}
push the boundaries of spatial reasoning and cross-center generalization.
Consequently, this diversity introduces a computationally brutal challenge. Since different downstream tasks favor distinct pre-training biases, as the model zoo expands, exhaustively fine-tuning each candidate for task-specific model selection has become computationally prohibitive, necessitating a highly efficient, training-free evaluation mechanism.

\subsection{Transferability Estimation}

Transferability Estimation (TE) tries to guide the model selection by predicting downstream performance using only pre-trained features, bypassing the prohibitive cost of full fine-tuning. Existing TE methods generally fall into two paradigms. 
\textbf{Label Comparison-Based Methods} \cite{tran2019transferability, nguyen2020leep} evaluate the alignment between source and target label distributions or pseudo-labels. However, they inherently rely on the source model's classifier and label space, rendering them fundamentally inapplicable to modern SSL foundation models that only provide feature extractors. 
\textbf{Source Embedding-Based Methods} bypass this limitation by quantifying the compatibility directly between pre-trained features and target labels. These approaches can be broadly categorized into: (i) \textit{Statistical \& Parametric Metrics}, such as LogME \cite{you2021logme}, $\mathcal{N}$LEEP \cite{li2021ranking}, H-score \cite{bao2019information}, PACTran \cite{ding2022pactran}, and GBC \cite{pandy2022transferability}, which estimate transferability by fitting linear models, Gaussian mixtures, or computing global class-conditional covariances; and (ii) \textit{Information-Theoretic \& Geometric Metrics}, such as PARC \cite{bolya2021scalable}, TransRate \cite{huang2022frustratingly}, OTCE \cite{tan2021otce}, ETran \cite{gholami2023etran}, and PEFTDiff \cite{khoba2025peftdiff}, which leverage pairwise similarities, optimal transport, or energy functions to measure feature-label alignment. Additionally, recent studies have begun exploring dataset transferability specifically within medical image classification \cite{juodelyte2024dataset}.
While effective for image-level classification, existing embedding-based metrics are ill-suited for dense 3D medical segmentation. Their reliance on global statistical aggregations or parametric assumptions failed to capture the spatial topologies and fine-grained boundary details essential for dense prediction.


 
\begin{figure*}[t]
    \includegraphics[width=0.99\linewidth, height=0.45\linewidth]{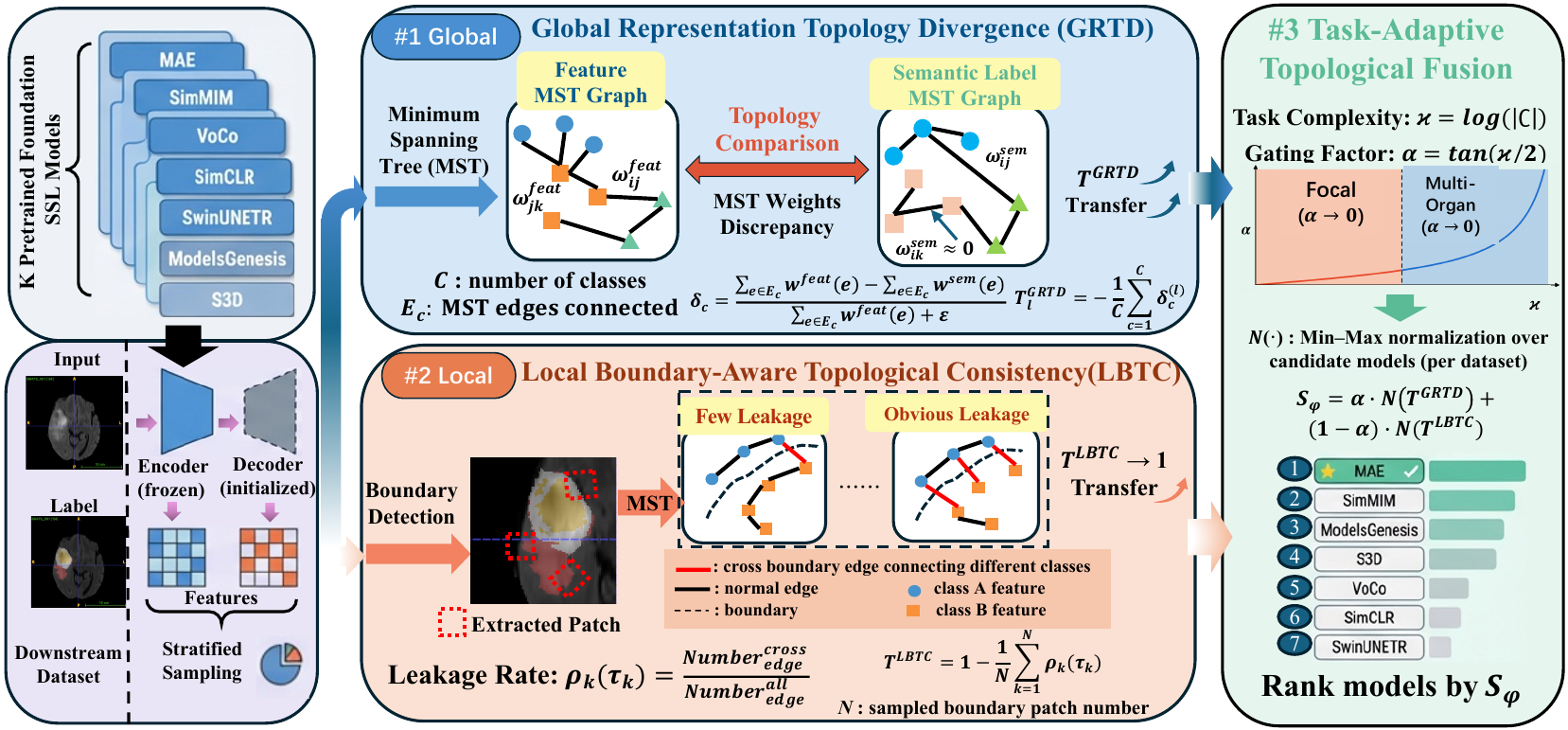}  
    \caption{Overview of our Topology-Driven TE framework. By constructing MST, the framework quantifies feature-label topological alignment across local boundary (LBTC) and global structural (GRTD) scales, fusing them via a task-adaptive gating mechanism to achieve accurate, training-free ranking of 3D medical foundation models.}
    \vspace{-0.3cm}
    \label{fig:framework}
\end{figure*}

\section{Methodology}
\label{sec:method}
To address the problem of selecting the optimal pre-trained encoder from a foundation model zoo \(\mathcal{M} = \{\phi_k\}_{k=1}^K\) for a downstream dense prediction task \(\mathcal{D}\), we aim to derive a training-free transferability score \(T(\phi, \mathcal{D})\). Existing Transferability Estimation (TE) metrics typically rely on statistical assumptions or require a pre-trained source classification head. Consequently, they fail to capture the complex spatial topologies and fine-grained boundary geometry inherent to 3D medical segmentation, and often suffer from prohibitive computational costs.

To overcome these limitations, we propose a non-parametric topological framework. As illustrated in Figure ~\ref{fig:framework}, consider an example of selecting a model for a brain tumor segmentation task. First, we pass the downstream MRI scans through a pre-trained model encoder and a randomly initialized decoder to extract dense spatial features. Next, as detailed in Section 3.1, we construct an MST over these features to extract their underlying topology without parametric assumptions. Building upon this, Section 3.2 formalizes our dual topological metrics to evaluate feature-label alignment: at the local scale, Local Boundary-Aware Topological Consistency (LBTC) assesses boundary separability by checking if local MST edges erroneously cross the tumor boundaries (i.e., leakage); at the macro scale, Global Representation Topology Divergence (GRTD) measures the overall structural layout by comparing the feature MST with the semantic label MST to verify if distinct tissues form well-separated clusters. These dual metrics are then fused via a task-adaptive gating mechanism based on the task's semantic complexity to output a final transferability score. Finally, Section 3.3 theoretically justifies our framework, demonstrating how spatial upsampling via an even randomly initialized decoder reduces the variance of pairwise distance estimates to robustly evaluate global alignment.

\subsection{MST-Based Topological Alignment}
\label{subsec:topology_proxy}

To overcome the limitation above, we first propose using the Minimum Spanning Tree (MST) to model the feature space. By linking data points through minimum-weight edges, the MST provides a discrete approximation of the feature space topology. This allows it to capture local neighborhood structures that are essential for recognizing boundaries while preserving the overall spatial layout of the features in a sparse, computationally efficient graph.

To further quantify transferability, we \textit{measure how well the topology of the pre-trained features aligns with the semantic structure required by the downstream task.} Given a set of \(N\) pixel-level feature--label pairs \(\mathcal{X} = \{(v_i, y_i)\}_{i=1}^N\), we construct two graphs on the same vertex set. The feature graph \(\mathcal{G}_{feat}\) uses Euclidean distances as edge weights, \(w_{ij}^{feat} = \|v_i - v_j\|_2\), reflecting the intrinsic geometry of the representation. Conversely, the semantic graph \(\mathcal{G}_{sem}\) adjusts these weights based on the task labels as \(w_{ij}^{sem} = \mathbb{1}[y_i \neq y_j] \min(\|v_i - v_j\|_2, \lambda)\),
where \(\lambda\) is an adaptive penalty set to the median pairwise Euclidean distance, which bounds the cost of crossing class boundaries. Let \(\mathcal{T}_{feat}\) and \(\mathcal{T}_{sem}\) denote the MSTs of \(\mathcal{G}_{feat}\) and \(\mathcal{G}_{sem}\), respectively. Here, \(\mathcal{T}_{feat}\) represents the natural clustering of the pre-trained features, while \(\mathcal{T}_{sem}\) represents an ideal structure where points within the same class are perfectly grouped. A pre-trained model tends to exhibit better transferability when its learned feature space aligns well with the target task's label distribution. Therefore, the structural divergence between \(\mathcal{T}_{feat}\) and \(\mathcal{T}_{sem}\) serves as a direct, training-free score of transferability.

In medical segmentation, a model must understand two distinct spatial properties: the overall layout of anatomies (e.g., separating the liver from the kidneys) and the fine-grained boundaries between adjacent tissues. As a single metric is hard to capture both, we decouple the structural comparison between \(\mathcal{T}_{feat}\) and \(\mathcal{T}_{sem}\) into two scales. Globally, we measure how well the features of different anatomies are separated into distinct clusters, which reflects the macro-anatomical layout. Locally, we zoom into the transition zones between adjacent tissues; measuring whether local feature neighborhoods cross semantic classes tells us how sharp and accurate the boundaries are.

\subsection{Dual Topological Requirements for Dense Prediction}
\label{subsec:dual_requirements}
To be specific, our framework consists of three key components.  First, we propose the Local Boundary-Aware Topological Consistency (LBTC) to assess the separability of features at local tissue boundaries. 
Second, we introduce the Global Representation Topology Divergence (GRTD) to measure how well the feature space captures the overall anatomical layout.
Finally, we combine these two metrics using a task-adaptive fusion mechanism, which dynamically adjusts their weights based on the semantic complexity of the target task.

\textbf{Local Boundary-Aware Topological Consistency (LBTC).} 
Boundaries are transition zones where adjacent tissues meet. For the segmentation task, a good representation should well separate the features of different classes in these local patches. We quantify this boundary sharpness using the Topological Leakage Rate, which measures the fraction of local Minimum Spanning Tree (MST) edges that cross between different classes in these ambiguous zones. Formally, for a local patch \(k\), let \(\mathcal{T}_k\) denote the MST constructed over its feature embeddings with vertex set \(\mathcal{V}_k\). The Topological Leakage Rate for this patch, denoted as \(\rho_k(\mathcal{T}_k)\), is defined as:
$
    \rho_k(\mathcal{T}_k) = \frac{1}{|\mathcal{V}_k| - 1} \sum_{(u,v) \in \mathcal{E}(\mathcal{T}_k)} \mathbb{I}(y_u \neq y_v)
$
where \(\mathcal{E}(\mathcal{T}_k)\) is the edge set of the MST, \(|\mathcal{V}_k| - 1\) represents the total number of edges in \(\mathcal{T}_k\), \(y_u\) and \(y_v\) are the ground-truth labels for nodes \(u\) and \(v\), and \(\mathbb{I}(\cdot)\) is the indicator function. A lower \(\rho_k\) indicates that the representation maintains sharp, well-separated boundaries within that local region.

\begin{assumption}[Feature--Label Alignment Principle]
\label{assum:transferability}
The downstream transferability of a pre-trained model is inversely related to the optimal Bayes error \(R^*\) of the features extracted by the fixed pre-trained encoder with respect to the target task labels~\cite{bao2019information, nguyen2020leep}.
\end{assumption}

Remarkably, our metric \(\rho\) provides exactly a theoretical guarantee to bound this Bayes error:

\begin{theorem}[MST Leakage Rate and Bayes Error]
\label{thm:leakage_bayes}
Assume the marginal distribution of $X$ is continuous with respect to the Lebesgue measure on $\mathbb{R}^d$. Let \(R_n^{\text{1NN}}\) be the empirical 1-Nearest Neighbor error and \(R^*\) the optimal Bayes error. The MST leakage rate \(\rho_n\) satisfies the finite-sample lower bound \(\rho_n \geq \frac{n}{2(n-1)}R_n^{\text{1NN}}\). As \(n \to \infty\),$\liminf_{n\to\infty} \rho_n \;\geq\; \frac{1}{2} R^{\text{1NN}} \;\geq\; \frac{R^*}{2} \quad \text{almost surely.}$

\end{theorem}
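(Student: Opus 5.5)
The proof rests on a combinatorial fact linking MST edges to nearest neighbors. In any graph with distinct edge weights (continuity of the marginal of $X$ makes ties a null event, so we may assume this almost surely), the edge from each vertex $i$ to its nearest neighbor $\mathrm{NN}(i)$ belongs to the MST. This is the cut property applied to the cut $\{i\}$ versus $\mathcal{V}\setminus\{i\}$: the lightest edge leaving $\{i\}$ is exactly $(i,\mathrm{NN}(i))$. So every point with a 1NN misclassification, that is $y_i \neq y_{\mathrm{NN}(i)}$, contributes the cross-class edge $\{i,\mathrm{NN}(i)\}$ to $\mathcal{E}(\mathcal{T})$. We define $R_n^{\text{1NN}} = \frac{1}{n}\sum_i \mathbb{I}(y_i\neq y_{\mathrm{NN}(i)})$, the leave-one-out empirical 1NN error on the sample.

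\textbf{Counting step.} The map $i\mapsto\{i,\mathrm{NN}(i)\}$ from misclassified points to cross-class MST edges is at most two-to-one. An edge $\{u,v\}$ can be the image of both $u$ and $v$ only if they are mutual nearest neighbors. It cannot be the image of any third vertex, because the image of $w$ contains $w$. Hence the number of cross-class MST edges is at least $\frac{1}{2}\, nR_n^{\text{1NN}}$. Dividing by $|\mathcal{V}|-1=n-1$ gives $\rho_n \ge \frac{n}{2(n-1)}R_n^{\text{1NN}}$ deterministically, off a probability-zero tie event.

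\textbf{Asymptotic step.} Since $\frac{n}{2(n-1)}\to\frac12$, we have $\liminf\rho_n \ge \frac12\liminf R_n^{\text{1NN}}$. We then need the empirical leave-one-out 1NN error to converge almost surely to the asymptotic 1NN risk $R^{\text{1NN}} = \mathbb{E}[2\eta(X)(1-\eta(X))]$ in the binary case, or its multiclass analogue. For this I would invoke the Cover--Hart theory together with a concentration argument. The summands have bounded differences: changing one sample alters at most a bounded number of nearest-neighbor relations, because in $\mathbb{R}^d$ a point is the nearest neighbor of at most $\gamma_d$ others (a cone-covering constant). McDiarmid's inequality then gives exponential concentration around the mean, and Borel--Cantelli upgrades this to almost-sure convergence. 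The mean $\mathbb{E}R_n^{\text{1NN}} = \Pr(Y_1\neq Y_{\mathrm{NN}(1)})$ tends to $R^{\text{1NN}}$ by the Cover--Hart argument: $X_{\mathrm{NN}(1)}\to X_1$ almost surely, and $\eta$ is approximated by continuous functions in $L^1(\mu)$ (Lebesgue differentiation).

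\textbf{Final inequality.} The bound $R^{\text{1NN}}\ge R^*$ is pointwise. With $\eta_c(x)=\Pr(Y=c\mid X=x)$, we have $1-\sum_c\eta_c^2 \ge 1-\max_c\eta_c$, since $\sum_c\eta_c^2\le\max_c\eta_c\sum_c\eta_c$. Taking expectations gives the claim.

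The main obstacle is the almost-sure convergence of the empirical 1NN error. Its terms are dependent, and the Cover--Hart argument only gives convergence in expectation. The bounded-difference constant $\gamma_d$ and the approximation of $\eta$ are where care is needed. If that route proves delicate, it would suffice for the stated $\liminf$ to show $\liminf R_n^{\text{1NN}}\ge R^{\text{1NN}}$ a.s., which follows from the same concentration argument.
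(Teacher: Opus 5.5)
Your finite-sample bound is the paper's argument: almost-sure distinctness of distances, then the cut property on $\{i\}$ to place each $\{i,\mathrm{NN}(i)\}$ in the MST, then the at-most-two-to-one count. The two proofs differ only in the asymptotic part.

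The paper asserts $R_n^{\text{1NN}}\to R^{\text{1NN}}$ almost surely ``under Stone's theorem'' and gives no argument. Stone's theorem proper concerns $k_n$-NN rules with $k_n\to\infty$ converging to $R^*$, so the citation does not by itself give this limit. The paper then proves the full Cover--Hart sandwich $R^*\le R^{\text{1NN}}\le 2R^*$, whose upper half the statement never uses. It takes $R^{\text{1NN}}\ge R^*$ from $R^*$ being an infimum over classifiers.

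You instead prove the convergence. The leave-one-out error has bounded differences $(1+2\gamma_d)/n$: replacing one sample changes only its own term and the terms of points whose nearest neighbour was, or becomes, that sample. McDiarmid plus Borel--Cantelli then gives $R_n^{\text{1NN}}-\mathbb{E}R_n^{\text{1NN}}\to 0$ a.s., and the mean converges by the Cover--Hart argument. Your pointwise bound $\sum_c\eta_c^2\le\max_c\eta_c$ is also a cleaner route to $R^{\text{1NN}}\ge R^*$. Your version is therefore more self-contained and more rigorous than the paper's at exactly the step the paper glosses over.

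Two small refinements:
\begin{enumerate}
\item In the mean-convergence step, approximating $\eta$ in $L^1(\mu)$ by a continuous $g$ only helps if you control $\mathbb{E}\,|\eta-g|(X_{\mathrm{NN}(1)})$. The law of $X_{\mathrm{NN}(1)}$ is not $\mu$, so you need Stone's lemma: $\mathbb{E}f(X_{\mathrm{NN}(1)})\le\gamma_d\,\mathbb{E}f(X)$ for $f\ge 0$, from the same cone-counting constant. Lebesgue differentiation is not what closes this step.
\item For the outer conclusion $\liminf\rho_n\ge R^*/2$ alone, you can skip the Cover--Hart limit. $\mathbb{E}R_n^{\text{1NN}}$ is the risk of a 1-NN classifier built from $(X_j,Y_j)_{j\ge 2}$, which is independent of $(X_1,Y_1)$, so it is at least $R^*$ for every $n$, and concentration finishes. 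The middle term $\tfrac12 R^{\text{1NN}}$ in the statement does still require the mean convergence you describe.
\end{enumerate}
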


Theorem~\ref{thm:leakage_bayes} establishes that a vanishing combinatorial discrepancy (\(\rho_n \to 0\)) implies \(R^* \to 0\), meaning the representation is Bayes-separable. Therefore, we define a set of boundary anchors \(\partial\mathcal{Y}\) via the morphological gradient of the ground-truth masks. For each anchor \(c_k \in \partial\mathcal{Y}\), we extract a local patch \(\mathcal{P}_k\) and construct its local MST, \(\mathcal{T}_k\). The per-layer LBTC score aggregates these local leakage inconsistencies over \(N\) sampled boundary regions:
\begin{equation}
T^{\text{LBTC}}_l = 1 - \frac{1}{N} \sum_{k=1}^{N} \rho(\mathcal{T}_k^{(l)}).
\end{equation}
The final LBTC score \(T^{\text{LBTC}} \in [0, 1]\) is obtained by aggregating these per-layer scores across the encoder feature maps. A score approaching 1 implies strict topological separation (i.e., vanishing leakage and bounded Bayes error) even within fine-grained, high-frequency transition zones.

\textbf{Global Representation Topology Divergence (GRTD).} 
Beyond local boundaries, in a good feature space, features from different organs should simply group into distinct, separate clusters on a global scale.
We evaluate this macro-level separation using the Tree-Weight Discrepancy \(\Delta_W\) ~\cite{hero1999asymptotic} over the entire volume. Because \(\mathcal{T}_{sem}\) connects points within the same class at zero cost (as defined in Eq. 1), its total weight represents the minimum cost required to connect different organ clusters. Therefore, the discrepancy \(\Delta_W = W(\mathcal{T}_{feat}) - W(\mathcal{T}_{sem})\) captures the extra intra-class variance and measures how far apart these organ clusters actually are in the feature space.

To ensure our metric is robust against severe class imbalance and varying feature scales across different pre-trained models, we normalize this discrepancy for each class. Let \(E_c\) be the set of edges in \(\mathcal{T}_{feat}\) that connect to at least one node of class \(c\). We define the normalized per-class discrepancy \(\delta_c\) and the final layer-wise score \(T_l^{\text{GRTD}}\) as:
\begin{equation}
    \delta_c = \frac{\sum_{e \in E_c} w^{feat}(e) - \sum_{e \in E_c} w^{sem}(e)}{\sum_{e \in E_c} w^{feat}(e) + \varepsilon}, \quad T_l^{\text{GRTD}} = -\frac{1}{C} \sum_{c=1}^C \delta_c^{(l)}
\end{equation}
where \(\sum_{e \in E_c} w(e)\) calculates the total weight of all edges associated with class \(c\), and \(\varepsilon\) is a small constant to prevent division by zero. Since a smaller \(\delta_c\) indicates tighter intra-class cohesion (i.e., higher quality features), we formulate the final scale-invariant score as \(T_l^{\text{GRTD}} = - \frac{1}{C}\sum_{c=1}^C \delta_c\), where a larger \(T_l^{\text{GRTD}}\) score corresponds to better transferability.

As we will theoretically justify in Section 3.3, passing frozen embeddings through a randomly initialized decoder spatially aggregates features while preserving their topology. Therefore, we compute the final GRTD score using the feature maps from the decoder.

\textbf{Task-Adaptive Topological Fusion.}
Medical segmentation targets exhibit distinct topological regimes. Multi-organ tasks with high structural complexity
require stronger global structural preservation, whereas small lesion extraction, dominated by boundary-level
ambiguity, benefits more from local topological contrast. To generalize across heterogeneous target distributions, we
introduce a parameter-free task-adaptive fusion mechanism driven by the semantic cardinality of the target task.

We define a task complexity prior as \(\kappa = \log(|\mathcal{C}|)\), where \(|\mathcal{C}|\) is the number of semantic classes. This prior is mapped to a gating factor through a tanh function:
$ 
  \alpha = \tanh\left(\frac{\kappa}{2}\right).
$  
Let \(\mathcal{N}(\cdot)\) denote Min-Max normalization over the model zoo. The final transferability score is
computed as
$
  \mathcal{S}_\phi =
  \alpha \cdot \mathcal{N}(T^{\text{GRTD}})
  +
  (1-\alpha) \cdot \mathcal{N}(T^{\text{LBTC}}).
$
Since \(\alpha\) monotonically increases with task semantic cardinality, complex anatomical segmentation tasks
naturally assign a higher weight to global topological structure, while focal pathology tasks with fewer semantic
classes emphasize local boundary-sensitive topology. This yields a parameter-free adaptive fusion rule across heterogeneous medical segmentation targets.

\subsection{Variance Reduction through Random Upsampling}
\label{subsec:mechanistic_deployment}
A key design choice is determining which layer's features to use for the metric. As we demonstrate in Section 4, computing global alignment on the features of a \textit{randomly initialized} decoder improves transferability estimation. To understand why this works, we model the random decoder as a composition of random linear projections and spatial upsampling. We first show that these random convolutions act as random projections that preserve the underlying feature topology, formalizing this through the Johnson-Lindenstrauss (JL) lemma~\cite{matouvsek2008variants} applied to MSTs.

\begin{theorem}[MST Topology Preservation under Random Projection]
\label{thm:mst_preserve}
Let \(\mathcal{X} = \{x_1, \ldots, x_n\} \subset \mathbb{R}^d\) have all pairwise distances distinct, and define the minimum distance ratio \(r = \min_{(i,j) \neq (k,l)} \|x_k - x_l\|_2 / \|x_i - x_j\|_2 > 1\) for \(\|x_i - x_j\|_2 < \|x_k - x_l\|_2\). 
Let \(f(x) = \frac{1}{\sqrt{m}}\mathbf{A}x\) where \(\mathbf{A} \in \mathbb{R}^{m \times d}\) has i.i.d. \(\mathcal{N}(0,1)\) entries. For any \(\delta \in (0,1)\) and \(\varepsilon < \frac{r - 1}{r + 1}\), choosing \(m \geq \frac{8}{\varepsilon^2}\ln\frac{n(n-1)}{\delta}\) guarantees that the MST edge set is perfectly preserved: \(E\bigl(\text{MST}(f(\mathcal{X}))\bigr) = E(\text{MST}(\mathcal{X}))\) with probability at least \(1 - \delta\).
\end{theorem}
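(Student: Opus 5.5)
The plan is to combine a standard Johnson--Lindenstrauss concentration bound, applied to all pairwise differences at once, with a purely combinatorial fact: the MST of a weighted graph with distinct weights depends only on the \emph{ordering} of the edge weights.

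\textbf{Step 1 (concentration and union bound).} For a fixed vector $u = x_i - x_j \neq 0$, the quantity $\|\mathbf{A}u\|_2^2/\|u\|_2^2$ is $\chi^2_m$-distributed. The standard Laurent--Massart (or Chernoff) bound gives
\[
\Pr\bigl[\,\bigl|\|f(u)\|_2^2 - \|u\|_2^2\bigr| > \varepsilon\|u\|_2^2\,\bigr] \le 2\exp\bigl(-m(\varepsilon^2/2-\varepsilon^3/3)/2\bigr) \le 2e^{-m\varepsilon^2/8}
\]
for $\varepsilon<1$, using $\varepsilon^2/2-\varepsilon^3/3\ge\varepsilon^2/4$. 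We take a union bound over the $\binom{n}{2}$ pairs, so the total failure probability is at most $n(n-1)e^{-m\varepsilon^2/8}$. This is at most $\delta$ exactly when $m \ge \frac{8}{\varepsilon^2}\ln\frac{n(n-1)}{\delta}$, which matches the stated constant. On the complementary event, every pair satisfies $(1-\varepsilon)\|u\|^2 \le \|f(u)\|^2 \le (1+\varepsilon)\|u\|^2$.

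\textbf{Step 2 (order preservation).} Fix two pairs with $a=\|x_i-x_j\| < b=\|x_k-x_l\|$, so that $b \ge r a$. On the good event,
\[
\|f(x_i)-f(x_j)\|^2 \le (1+\varepsilon)a^2 \quad\text{and}\quad \|f(x_k)-f(x_l)\|^2 \ge (1-\varepsilon)b^2 \ge (1-\varepsilon)r^2a^2.
\]
Strict order is therefore preserved whenever $(1+\varepsilon) < (1-\varepsilon)r^2$. Since $r>1$ gives $r^2>r$, the hypothesis $\varepsilon<\frac{r-1}{r+1}$ implies $(1+\varepsilon)<(1-\varepsilon)r<(1-\varepsilon)r^2$, which suffices. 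So the projected edge weights are pairwise distinct and appear in the same total order as the original ones. (The hypothesis is stronger than necessary here; it would be exactly tight if the JL guarantee were stated for unsquared norms, and either reading works.)

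\textbf{Step 3 (MST depends only on the order).} Kruskal's algorithm processes the edges in increasing weight order and accepts an edge iff it joins two distinct components. Its output is therefore a function of the ordering alone. Because the weights are distinct, the MST is unique, so this output is \emph{the} MST. The original and projected complete graphs share the same edge ordering, hence Kruskal produces the same edge set, giving $E(\mathrm{MST}(f(\mathcal{X})))=E(\mathrm{MST}(\mathcal{X}))$ with probability at least $1-\delta$.

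\textbf{Main obstacle.} The only delicate point is bookkeeping of constants. We must confirm that the chi-square tail delivers the constant $8$ with the factor $2$ absorbed into $n(n-1)$ rather than $\binom{n}{2}$, and that squared versus unsquared distortion is compatible with the condition $\varepsilon<\frac{r-1}{r+1}$. Both check out as in Steps 1 and 2.
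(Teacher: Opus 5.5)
Your argument is correct and follows the paper's proof essentially step for step: a $\chi^2_m$ tail bound with a union bound over the $\binom{n}{2}$ differences, preservation of the edge-weight order via the ratio $r$, and the observation that Kruskal's output depends only on that order. The one difference is cosmetic. You compare squared distances directly, which needs $(1+\varepsilon)<(1-\varepsilon)r^2$. The paper instead first passes to an unsquared $(1\pm\varepsilon)$-isometry via $\sqrt{1\pm\varepsilon}\in[1-\varepsilon,1+\varepsilon]$ and then uses $(1+\varepsilon)<(1-\varepsilon)r$.

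One small repair is needed in Step 1. The inequality $\varepsilon^2/2-\varepsilon^3/3\ge\varepsilon^2/4$ holds only for $\varepsilon\le 3/4$, not for all $\varepsilon<1$. The theorem allows $\varepsilon$ up to $(r-1)/(r+1)$, which exceeds $3/4$ once $r>7$. To cover the full range, use the exact Chernoff exponent $\frac{m}{2}\bigl(\varepsilon-\ln(1+\varepsilon)\bigr)$ together with $\varepsilon-\ln(1+\varepsilon)\ge\varepsilon^2/4$ on $[0,1]$; this holds because the difference vanishes at $0$ and its derivative is $\varepsilon(1-\varepsilon)/(2(1+\varepsilon))\ge 0$. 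Alternatively, invoke the Laurent--Massart bound directly, as the paper does.
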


Having established that the topological skeleton survives the random convolutions,
we next explain why the \textit{upsampling} operation specifically drives global
stabilization. Projecting a single latent feature into \(S\) new spatial locations
via a random kernel is mathematically equivalent to expanding the effective
projection dimension from \(m\) to \(Sm\).

\begin{proposition}[Variance Reduction via Upsampling Spatial Expansion]
\label{prop:variance_reduction}
Let \(S\) be the spatial expansion factor of an upsampling layer and
\(\mathbf{A} \in \mathbb{R}^{Sm \times d}\) a random matrix with i.i.d.\
\(\mathcal{N}(0,1)\) entries, modelling the randomly initialised upsampling
projection (in transposed convolutions, the \(S\) distinct kernel-position weight
slices are mutually independent under random initialisation, making this model
exact). Let \(\mathbf{A}_1 \in \mathbb{R}^{m \times d}\) denote any fixed
\(m\)-row submatrix of \(\mathbf{A}\). For any fixed
\(u = x_i - x_j \in \mathbb{R}^d\), the single-location and
spatially-aggregated squared distance estimates are defined as
\(\hat{D}_1 = \frac{1}{m}\|\mathbf{A}_1 u\|_2^2\) and
\(\hat{D}_S = \frac{1}{Sm}\|\mathbf{A}u\|_2^2\), respectively.
Both are unbiased estimators of \(\|u\|_2^2\), but spatial expansion strictly
reduces the variance:
$
    \operatorname{Var}[\hat{D}_S]
    \;=\; \frac{1}{S}\operatorname{Var}[\hat{D}_1]
    \;=\; \frac{2}{Sm}\|u\|_2^4.
$
Consequently, for any \(\varepsilon > 0\), the tail probability bound drops
exponentially with the spatial expansion factor \(S\):
$
    \mathbb{P}\!\left[\,
        \bigl|\hat{D}_S - \|u\|_2^2\bigr| > \varepsilon\|u\|_2^2
    \,\right]
    \;\leq\;
    2\exp\!\left(-\frac{Sm\varepsilon^2}{8}\right).
$
\end{proposition}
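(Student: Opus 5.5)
The plan is to reduce everything to the distribution of $\|\mathbf{A}u\|_2^2$ for a fixed vector $u\neq 0$; if $u=0$ both estimators are identically zero and the statement is trivial. Each row $a_k$ of $\mathbf{A}$ has i.i.d.\ $\mathcal{N}(0,1)$ entries. By rotation invariance, $Z_k := a_k^\top u/\|u\|_2 \sim \mathcal{N}(0,1)$, and the $Z_k$ are independent across the $Sm$ rows. Hence
\begin{equation*}
\|\mathbf{A}u\|_2^2=\|u\|_2^2\sum_{k=1}^{Sm}Z_k^2=\|u\|_2^2\,\chi^2_{Sm},
\end{equation*}
and likewise $\|\mathbf{A}_1u\|_2^2=\|u\|_2^2\,\chi^2_m$. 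The $m$ rows of $\mathbf{A}_1$ are a fixed subset, so they are still i.i.d.\ Gaussian rows. This is where the modelling assumption enters: the $S$ kernel-position slices are mutually independent, so all $Sm$ rows are independent.

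With the chi-square representation, the moment claims are routine. Since $\mathbb{E}[\chi^2_k]=k$ and $\operatorname{Var}[\chi^2_k]=2k$, we get $\mathbb{E}\hat D_1=\mathbb{E}\hat D_S=\|u\|_2^2$, so both estimators are unbiased. For the variances, $\operatorname{Var}\hat D_1=\frac{2m}{m^2}\|u\|_2^4=\frac{2}{m}\|u\|_2^4$ and $\operatorname{Var}\hat D_S=\frac{2Sm}{(Sm)^2}\|u\|_2^4=\frac{2}{Sm}\|u\|_2^4$, which gives exactly the claimed factor $1/S$. Strict reduction holds whenever $S>1$ and $u\neq0$.

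For the tail bound, I would apply the standard chi-square concentration used in the proof of the Johnson--Lindenstrauss lemma, the same ingredient behind Theorem~\ref{thm:mst_preserve}, with $k=Sm$ degrees of freedom. The upper tail comes from a Chernoff bound on the moment generating function $\mathbb{E}e^{tZ^2}=(1-2t)^{-1/2}$ for $t<1/2$. This gives
\begin{equation*}
\mathbb{P}[\chi^2_k\ge (1+\varepsilon)k]\le \bigl((1+\varepsilon)e^{-\varepsilon}\bigr)^{k/2},
\end{equation*}
and the same computation with $t<0$ gives
\begin{equation*}
\mathbb{P}[\chi^2_k\le(1-\varepsilon)k]\le\bigl((1-\varepsilon)e^{\varepsilon}\bigr)^{k/2}.
\end{equation*}
Using $\ln(1+\varepsilon)\le\varepsilon-\varepsilon^2/2+\varepsilon^3/3$ and $\ln(1-\varepsilon)\le-\varepsilon-\varepsilon^2/2$, both tails are bounded by $\exp(-k(\varepsilon^2-\varepsilon^3)/4)$. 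For $\varepsilon\le 1/2$ this is at most $\exp(-k\varepsilon^2/8)$, and a union bound over the two tails gives the factor $2$.

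The main obstacle is the regime $\varepsilon>1/2$, where the sub-Gaussian form $e^{-k\varepsilon^2/8}$ is not automatic. For $\varepsilon\ge1$ the lower tail is empty. The upper tail, however, is only sub-exponential, and for large $\varepsilon$ its true decay is of order $e^{-c k\varepsilon}$, which is slower than $e^{-k\varepsilon^2/8}$. I would therefore state the tail bound for $\varepsilon\in(0,1/2]$, or more generally $(0,1)$ via Laurent--Massart with the constant adjusted, since that is the range relevant to the JL setting of Theorem~\ref{thm:mst_preserve}. Alternatively one could write the general bound in the form $2\exp(-Sm\min(\varepsilon^2,\varepsilon)/8)$. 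Either way, the exponential improvement in $S$ follows because the exponent scales linearly with $k=Sm$.
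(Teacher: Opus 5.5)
Your chi-square representation and the mean and variance computations are exactly the paper's proof. The paper writes $\hat D_S=\frac{\|u\|_2^2}{Sm}\chi^2_{Sm}$ and $\hat D_1=\frac{\|u\|_2^2}{m}\chi^2_m$ and reads the moments off. Your caveat that the reduction is strict only for $S>1$ and $u\neq 0$ is a correct refinement that the paper omits.

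On the tail bound you depart from the paper, and you are right to. The paper only cites Laurent--Massart for the $Sm$-dimensional projection and asserts $2\exp(-Sm\varepsilon^2/8)$ for every $\varepsilon>0$. Laurent--Massart states $\mathbb{P}[X-k\ge 2\sqrt{kx}+2x]\le e^{-x}$ and $\mathbb{P}[k-X\ge 2\sqrt{kx}]\le e^{-x}$ for $X\sim\chi^2_k$. With $x=k\varepsilon^2/8$ it gives the upper-tail bound only when $2\sqrt{kx}+2x=k\varepsilon(1/\sqrt{2}+\varepsilon/4)\le k\varepsilon$, i.e.\ $\varepsilon\le 4-2\sqrt{2}\approx 1.17$.

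Moreover, the statement itself is false for large $\varepsilon$. With $S=m=1$ and $\varepsilon=20$, $\mathbb{P}[\chi^2_1>21]=\mathbb{P}[|Z|>\sqrt{21}]\approx 4.6\times 10^{-6}$, whereas $2e^{-50}\approx 3.9\times 10^{-22}$. More generally, for fixed $Sm$ the upper tail decays only like $e^{-Sm\,\varepsilon/2}$, up to factors polynomial in $\varepsilon$. So the claim as written cannot be proved, and restricting $\varepsilon$ is the right repair. This also matches the paper's own proof of Theorem~\ref{thm:mst_preserve}, which invokes Laurent--Massart only for $\varepsilon\in(0,1)$.

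One refinement to your proposal: on $(0,1]$ you do not need to adjust the constant. Laurent--Massart with $x=Sm\,\varepsilon^2/8$ gives exactly $2\exp(-Sm\varepsilon^2/8)$ on that range. Your elementary Chernoff computation is therefore a self-contained but narrower substitute ($\varepsilon\le 1/2$) for the citation.

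Your all-$\varepsilon$ form $2\exp(-Sm\min(\varepsilon,\varepsilon^2)/8)$ is also valid. For $\varepsilon\ge 1$ the lower tail is empty. Laurent--Massart with $x=Sm\,\varepsilon/8$ covers the upper tail, because $\sqrt{\varepsilon/2}+\varepsilon/4\le\varepsilon$ once $\varepsilon\ge 8/9$. In every version the exponent is linear in $Sm$, which is all the ``exponential in $S$'' conclusion needs.
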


Proposition~\ref{prop:variance_reduction} theoretically validates our design:
upsampling makes the global feature structure easier to evaluate. By aggregating
features spatially, it reduces the standard deviation of pairwise distance
estimates by a factor of \(\sqrt{S}\), thereby stabilizing the distance ordering.

\section{Experiments}

\subsection{Setup and Implementation Details}
We evaluate our approach using the OpenMind benchmark~\cite{wald2025openmind}, which provides foundation models pre-trained on 114,000 unlabeled 3D brain MRIs. The model pool features a ResEnc-L backbone~\cite{isensee2024nnu} trained with 7 diverse self-supervised objectives, including reconstruction (MAE~\cite{He2022MAE}, SimMIM~\cite{chen2023masked}, MG~\cite{zhou2021models}) and contrastive (VoCo~\cite{wu2024voco}, SimCLR~\cite{chen2020simple}, SwinUNETR~\cite{Tang2022SwinUNETR}, VoCo~\cite{wu2024voco}) methods. To comprehensively assess transferability, we employ 5 in-distribution (ID) brain and head-and-neck MRI tasks (ISL~\cite{hernandez2022isles}, HNT~\cite{wahid2024hntsmrg}, MSF~\cite{muslim2022ms}, TPC~\cite{yang2024topcow}, YBM~\cite{ramakrishnan2024large}, and 2 out-of-distribution (OOD) tasks featuring anatomical or cross-modal shifts (ACDC~\cite{bernard2018acdc} for cardiac MRI, KiTS19~\cite{heller2021state} for kidney CT). For more details, please refer to the Appendix~\ref{app:setup}.

For each candidate foundation model's encoder, we attach a randomly initialized nnU-Net decoder and extract multi-scale features via sliding-window inference. To evaluate transferability efficiently while mitigating the severe class imbalance inherent to medical images, we employ a class-proportional stratified sampling strategy, randomly extracting 256 foreground and 2,560 background voxels per volume. Crucially, following our theoretical insights, the GRTD is computed on the decoder layers to leverage their spatial unfolding bias, whereas LBTC operates on the encoder layers. To account for the varying representational focus at different depths, we aggregate the per-layer scores for both metrics using a Gaussian weighting strategy centered at their respective mid-level layers. Finally, we measure the model ranking quality using the weighted Kendall's \(\tau^{*,w}\)~\cite{vigna2015weighted}, which assigns higher penalties to discordant pairs near the top of the ranking to reflect the practical priority of model selection. For comprehensive implementation details, including the exact Gaussian weighting formulations, metric hyperparameters, and per-case sampling budgets, please refer to Appendix~\ref{sec:setting_details}.

\begin{table}[t]
\centering
\caption{Weighted Kendall's \(\tau\) for transferability estimation on the segmentation tasks.}
\vspace{-2mm}
\label{tab:main_results}
\small 
\renewcommand{\arraystretch}{0.95} 
\setlength{\tabcolsep}{4pt} 

\begin{tabular}{l ccccc |cc |c}
\toprule
\multirow{2}{*}{\textbf{Method}} &
\multicolumn{5}{c}{\textbf{ID (Same Region)}} &
\multicolumn{2}{c}{\textbf{OOD}} &
\multirow{2}{*}{\textbf{Avg}} \\
\cmidrule(lr){2-6} \cmidrule(lr){7-8} 
& \textbf{MSF} & \textbf{ISL} & \textbf{HNT} & \textbf{TPC} & \textbf{YBM} & \textbf{ACD} & \textbf{KIT} & \\
\midrule
LogME~\cite{you2021logme}           & -0.514 & 0.134 & -0.310 & 0.034 & 0.260 & -0.228 & -0.456 & -0.154 \\
LEEP~\cite{nguyen2020leep}          & -0.524 & -0.313 & -0.216 & -0.109& -0.512 & -0.109 & 0.002 & -0.254 \\
GBC~\cite{pandy2022transferability} & -0.471 & -0.174 & -0.710 & -0.230 & -0.523 & 0.451 & -0.000 & -0.237 \\
CCFV~\cite{yang2023pick}            & 0.103 & 0.104 & 0.407 & 0.162  & 0.153 & \textbf{0.943} & 0.034 & 0.272 \\
Ours                                & \textbf{0.942} & \textbf{0.352} & \textbf{0.842} & \textbf{0.546} & \textbf{0.756} & 0.910 & \textbf{0.116} & \textbf{0.638} \\
\bottomrule
\end{tabular}
\vspace{-3mm} 
\end{table}

\subsection{Main Results}
The evaluation of our topology-driven framework against established Transferability Estimation (TE) metrics using the Weighted Kendall's \(\tau\) are shown in Table \ref{tab:main_results}. The results starkly validate our arguments: traditional classification-based metrics (LogME~\cite{you2021logme}, LEEP~\cite{nguyen2020leep}, GBC~\cite{pandy2022transferability}) consistently yield near-zero or negative correlations across both In-Distribution (ID) and Out-of-Distribution (OOD) datasets. This confirms that simply measuring global class-conditional feature separation misleads model selection for medical segmentation, where complex spatial topologies govern performance.

Compared to CCFV~\cite{yang2023pick}, the only method designed for segmentation tasks, our framework demonstrates superiority. On ID tasks with highly fragmented boundaries, such as MSF and HNT, our method achieves remarkable \(\tau\) scores of 0.942 and 0.842, vastly exceeding CCFV (0.103 and 0.407) and highlighting the efficacy of our Local Boundary-Aware Topological Consistency (LBTC). Furthermore, under severe OOD shifts involving distant anatomical regions and cross-modality transfers, our method maintains robust predictive power. It achieves a highly competitive correlation on ACD (\(\tau=0.910\)) and outperforms all baselines on the challenging KIT dataset (\(\tau=0.116\), vs. CCFV's 0.034). Overall, our framework yields an average \(\tau\) of 0.638 across all tasks—more than double that of CCFV (0.272), proving that our metric provides a highly reliable model selection mechanism across diverse clinical complexities.

\begin{figure*}[tbp]
    \includegraphics[width=0.99\linewidth, height=0.28\linewidth]{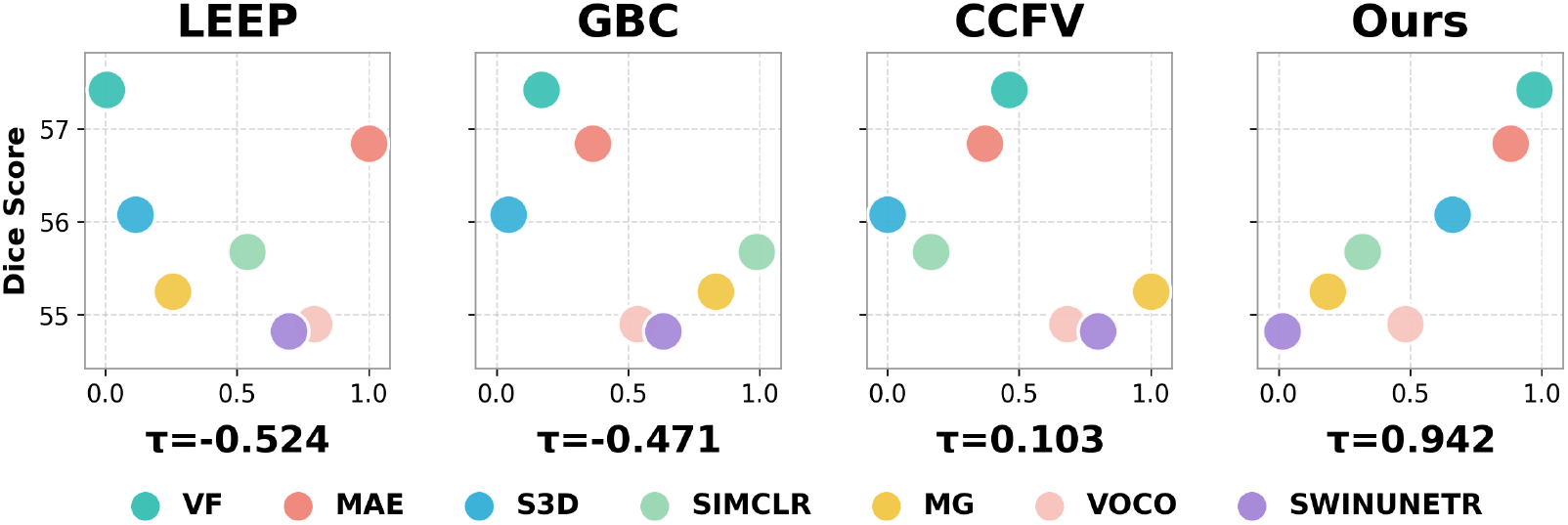} 
    \vspace{-0.2cm}
    \caption{Correlation between the fine-tuning performance and transferability metrics using MSF as an example. The vertical axis represents the average Dice, while the horizontal axis represents the \emph{normalized} transferability metric. }
\label{fig:correlation_matrix}
\vspace{-4mm}
\end{figure*}

\subsection{Analysis}

\noindent\textbf{Computational Time Comparison.}
Table~\ref{tab:time_skip} (left) contrasts the practical cost of model selection via training-free scoring versus exhaustive downstream fine-tuning. While exhaustive fine-tuning across the model zoo requires a prohibitive amount of time (e.g., 3000+ minutes per task), training-free TE methods offer a more viable path. However, existing dense-prediction metrics like CCFV~\cite{yang2023pick} still incur substantial computational overhead (averaging 841.63 seconds per task) due to the heavy burden of computing dense pairwise relationships. Our approach effectively ease this bottleneck, reducing the average evaluation time to merely 15.09 seconds averaged across three different random seeds (for the random seed analysis, please refer to Appendix ~\ref{app:analysis}), achieving a remarkable speedup of approximately 56 times over CCFV. This efficiency stems fundamentally from our topology-driven design. By utilizing the Minimum Spanning Tree (MST), we bypass the need to evaluate the dense correlation graph, efficiently distilling the feature manifold into a sparse and highly effective representation.

\begin{table}[htp]
\centering
\vspace{-3mm}
\caption{
    \textbf{Left:} Time comparison (seconds) across different datasets.
    \textbf{Right:} Decoder vs. Encoder-side Skip Source comparison (ER/PR metrics).
}
\vspace{-2mm}
\label{tab:time_skip}
\footnotesize
\renewcommand{\arraystretch}{0.95} 
\setlength{\tabcolsep}{3pt}       

\begin{minipage}[t]{0.38\textwidth}
\centering
\textbf{Time comparison}\\
\begin{tabular}{lccc}
\toprule
Dataset & CCFV & Ours & Fine-tune \\
\midrule
TPC & 1050.5 & \textbf{30.69} & 3000+ \\
ACD & 828.9 &  \textbf{15.02} & 3000+ \\
MSF & 175.3 &  \textbf{5.62} & 3000+ \\
YBM & 1311.8 &  \textbf{9.03} & 3000+ \\
\midrule
Avg. & 841.63 & 15.09 & 3000+ \\
\bottomrule
\end{tabular}
\end{minipage}
\hfill
\begin{minipage}[t]{0.58\textwidth}
\centering
\textbf{Decoder vs. Encoder-side Skip Source}\\
\begin{tabular}{l ccc ccc}
\toprule
& \multicolumn{3}{c}{\(\Delta \mathrm{ER}\)} & \multicolumn{3}{c}{\(\Delta \mathrm{PR}\)} \\
\cmidrule(lr){2-4}\cmidrule(lr){5-7}
Model & Full & NoSkip & \(\Delta\) & Full & NoSkip & \(\Delta\) \\
\midrule
MG         & 38.1 & \textbf{52.4} & \(+14.3\) & 13.1 & \textbf{38.7} & \(+25.6\) \\
SimCLR     & 36.1 & \textbf{47.5} & \(+11.4\) & 12.4 & \textbf{25.2} & \(+12.8\) \\
SwinUNETR  & 36.8 & \textbf{66.9} & \(+30.1\) &  8.7 & \textbf{39.1} & \(+30.4\) \\
\midrule
Mean       & 37.0 & \textbf{55.6} & \(+18.6\) & 11.4 & \textbf{34.3} & \(+22.9\) \\
\bottomrule
\end{tabular}
\end{minipage}
\vspace{-3mm} 
\end{table}

\noindent\textbf{Random Decoder Analysis}
We validate the decoder's topological roles by measuring changes in Effective Rank (\(\Delta \mathrm{ER}\)) and Participation Ratio (\(\Delta \mathrm{PR}\)) from encoder skip sources to decoder representations. These metrics quantify manifold dimensionality, where increases reflect spatial expansion and reduced feature collapse. As Table~\ref{tab:time_skip} (Right) shows, the standard decoder yields positive \(\Delta \mathrm{ER}\) (mean \(37.0\)) and \(\Delta \mathrm{PR}\) (mean \(11.4\)). Removing skip connections (\textit{NoSkip}) drastically amplifies this expansion (\(\Delta \mathrm{ER}\) surges by \(+18.6\), \(\Delta \mathrm{PR}\) by \(+22.9\)). This corroborates our theoretical insights: pure upsampling drives global manifold unfolding to capture global structural layouts.

\noindent\textbf{The Necessity of Task-Adaptive Fusion}
Medical segmentation targets exhibit distinct topological regimes. As shown in Table~\ref{tab:adaptive_cls}(Left), we stratify tasks into \textit{Fragmented} targets (ISL, MSF) which rely heavily on local boundary contrast, and \textit{Structured} targets (ACD, TPC) which demand global structural preservation. The Global metric (GRTD) excels on structured tasks but fails on fragmented ones, while the Local metric (LBTC) shows the exact opposite trend. Our Task-Adaptive fusion dynamically balances these priors based on semantic cardinality, achieving robust, optimal performance across all anatomical regimes.

\begin{table}[htbp] 
\centering
\caption{
    \textbf{Left:} Adaptive fusion performance across anatomical targets in different dataset. 
    \textbf{Right:} Transferability estimation results on classification tasks (MRN~\cite{bien2018deep}, ABI~\cite{di2014autism}).
}
\label{tab:adaptive_cls}
\vspace{-2mm} 

\small
\renewcommand{\arraystretch}{1.0}
\setlength{\tabcolsep}{3pt}

\begin{minipage}[t]{0.52\textwidth}
\centering
\textbf{Adaptive fusion performance}\\
\vspace{2pt}
\begin{tabular}{l|cc|cc|c}
\toprule
\textbf{Method} & ISL & MSF & ACD & TPC & Average \\
\midrule
GRTD & 0.09 & 0.06 & \textbf{0.94} & \textbf{0.55} & 0.41 \\
LBTC & \textbf{0.35} & \textbf{0.94} & 0.03 & 0.12 & 0.36 \\
\midrule
\textbf{Adaptive} & \textbf{0.35} & \textbf{0.94} & \textbf{0.91} & \textbf{0.55} & \textbf{0.69} \\
\bottomrule
\end{tabular}
\end{minipage}
\hfill
\begin{minipage}[t]{0.45\textwidth}
\centering
\textbf{Our topology-based TE on Classification Task}\\
\vspace{2pt}
\begin{tabular}{lccc}
\toprule
\textbf{Method} & MRN & ABI & Average \\
\midrule
LEEP   & 0.397 & 0.362 & 0.380 \\
LogME  & -0.534 & 0.135 & -0.200 \\
\midrule
Ours   & \textbf{0.407} & \textbf{0.800} & 0.604 \\
\bottomrule
\end{tabular}
\end{minipage}
\vspace{-1mm}
\end{table}

\noindent\textbf{Extension to Classification and Regression}
\label{subsec:classification_extension}
While primarily designed for dense prediction, our framework naturally extends to global tasks (classification and regression) using spatially pooled representations. To handle severe class imbalance, we introduce the \textit{Topological Label Coherence (TLC)} rate, \(\rho_n^{\text{TLC}}\). For classification, TLC acts as a macro-averaged MST leakage rate to ensure equal contribution from minority classes; for regression, it measures normalized continuous label variation along the tree. 
Crucially, \(\rho_n^{\text{TLC}}\) retains strong theoretical guarantees: it analytically lower-bounds the empirical 1-NN risk, which in turn bounds the optimal Bayes risk \(R^\ast\) via the Cover-Hart inequality. This yields a theoretically grounded and efficient transferability score, \(\mathcal{S}^{\text{TLC}}(\phi) = 1 - \rho_n^{\text{TLC}}\). Empirical results for this extension are summarized in Table~\ref{tab:adaptive_cls}(Right).

\noindent\textbf{Pixel Partial Sampling Analysis}
\label{app:fullvs_partial}
To ensure our budget-constrained pixel sampling strategy (256 foreground and 2560 background voxels per case) introduces no systematic bias, we compared it against an exhaustive full-pixel approach (around $10^4$ foreground voxels with a 1:10 background ratio) and tested it on the MSF dataset. Results show that both strategies yielded identical model rankings with negligible absolute score differences. As for more detailed analysis, please refer to Appendix~\ref{app:analysis}.

\section{Conclusion \& Limitations}

We introduce a non-parametric, topology-driven framework for efficient transferability estimation (TE) in 3D medical vision foundation models. To overcome the architectural mismatches and spatial blindness of existing metrics, we evaluate feature-label alignment directly via Minimum Spanning Trees (MST). Our approach decouples evaluation into Local Boundary-Aware Topological Consistency (LBTC) and Global Representation Topology Divergence (GRTD), dynamically fused via a task-adaptive gate. Furthermore, we prove that a randomly initialized decoder acts as a variance-reducing spatial projector, stabilizing global alignment. Ultimately, our method achieves state-of-the-art ranking performance while significantly accelerating evaluation.

Despite these advances, certain limitations remain. First, robust MST construction inherently relies on sufficient target annotations, leaving its efficacy in extreme few-shot regimes an open question. Second, while we formally justify the spatial projection effect for convolution-based upsampling, its generalization to purely attention-based or diffusion-driven decoders warrants deeper investigation. Looking ahead, beyond its primary application in foundation model selection, our topological metrics could potentially evolve into online diagnostic tools for monitoring pre-training trajectories or guiding parameter-efficient fine-tuning. To sum up, this framework paves the way for more interpretable and resource-efficient adaptation of medical vision foundation models.

\newpage
\bibliographystyle{plain}
\bibliography{cite}

@inproceedings{isensee2024nnu,
  title={nnu-net revisited: A call for rigorous validation in 3d medical image segmentation},
  author={Isensee, Fabian and Wald, Tassilo and Ulrich, Constantin and Baumgartner, Michael and Roy, Saikat and Maier-Hein, Klaus and Jaeger, Paul F},
  booktitle={International Conference on Medical Image Computing and Computer-Assisted Intervention},
  pages={488--498},
  year={2024},
  organization={Springer}
}

@inproceedings{chen2023masked,
  title={Masked image modeling advances 3d medical image analysis},
  author={Chen, Zekai and Agarwal, Devansh and Aggarwal, Kshitij and Safta, Wiem and Balan, Mariann Micsinai and Brown, Kevin},
  booktitle={Proceedings of the IEEE/CVF Winter Conference on Applications of Computer Vision},
  pages={1970--1980},
  year={2023}
}

@article{zhou2021models,
  title={Models genesis},
  author={Zhou, Zongwei and Sodha, Vatsal and Pang, Jiaxuan and Gotway, Michael B and Liang, Jianming},
  journal={Medical image analysis},
  volume={67},
  pages={101840},
  year={2021},
  publisher={Elsevier}
}

@inproceedings{chen2020simple,
  title={A simple framework for contrastive learning of visual representations},
  author={Chen, Ting and Kornblith, Simon and Norouzi, Mohammad and Hinton, Geoffrey},
  booktitle={International conference on machine learning},
  pages={1597--1607},
  year={2020},
  organization={PmLR}
}

@article{juodelyte2024dataset,
  title={On dataset transferability in medical image classification},
  author={Juodelyte, Dovile and Ferrante, Enzo and Lu, Yucheng and Singh, Prabhant and Vanschoren, Joaquin and Cheplygina, Veronika},
  journal={arXiv preprint arXiv:2412.20172},
  year={2024}
}

@article{ramakrishnan2024large,
  title={A large open access dataset of brain metastasis 3D segmentations on MRI with clinical and imaging information},
  author={Ramakrishnan, Divya and Jekel, Leon and Chadha, Saahil and Janas, Anastasia and Moy, Harrison and Maleki, Nazanin and Sala, Matthew and Kaur, Manpreet and Petersen, Gabriel Cassinelli and Merkaj, Sara and others},
  journal={Scientific Data},
  volume={11},
  number={1},
  pages={254},
  year={2024},
  publisher={Nature Publishing Group UK London}
}

@article{grovik2020deep,
  title={Deep learning enables automatic detection and segmentation of brain metastases on multisequence MRI},
  author={Gr{\o}vik, Endre and Yi, Darvin and Iv, Michael and Tong, Elizabeth and Rubin, Daniel and Zaharchuk, Greg},
  journal={Journal of Magnetic Resonance Imaging},
  volume={51},
  number={1},
  pages={175--182},
  year={2020},
  publisher={Wiley Online Library}
}

@article{chen2022carotid,
  title={Carotid vessel wall segmentation and atherosclerosis diagnosis challenge},
  author={Chen, Huijun and Zhao, Xihai and Dou, Jiaqi and Du, C and Yang, R and Sun, H and Yu, S and Zhao, H and Yuan, C and Balu, N},
  journal={Runyu Yang, Haozhong Sun, Shuwan Yu, Huilin Zhao, Chun Yuan, and Niranjan Balu,“Carotid vessel wall segmentation and atherosclerosis diagnosis challenge},
  year={2022}
}

@article{di2014autism,
  title={The autism brain imaging data exchange: towards a large-scale evaluation of the intrinsic brain architecture in autism},
  author={Di Martino, Adriana and Yan, Chao-Gan and Li, Qingyang and Denio, Erin and Castellanos, Francisco X and Alaerts, Kaat and Anderson, Jeffrey S and Assaf, Michal and Bookheimer, Susan Y and Dapretto, Mirella and others},
  journal={Molecular psychiatry},
  volume={19},
  number={6},
  pages={659--667},
  year={2014},
  publisher={Nature Publishing Group}
}

@article{bien2018deep,
  title={Deep-learning-assisted diagnosis for knee magnetic resonance imaging: development and retrospective validation of MRNet},
  author={Bien, Nicholas and Rajpurkar, Pranav and Ball, Robyn L and Irvin, Jeremy and Park, Allison and Jones, Erik and Bereket, Michael and Patel, Bhavik N and Yeom, Kristen W and Shpanskaya, Katie and others},
  journal={PLoS medicine},
  volume={15},
  number={11},
  pages={e1002699},
  year={2018},
  publisher={Public Library of Science San Francisco, CA USA}
}

@article{matouvsek2008variants,
  title={On variants of the Johnson--Lindenstrauss lemma},
  author={Matou{\v{s}}ek, Ji{\v{r}}{\'\i}},
  journal={Random Structures \& Algorithms},
  volume={33},
  number={2},
  pages={142--156},
  year={2008},
  publisher={Wiley Online Library}
}

@article{yang2024topcow,
  title={TopCoW: Benchmarking Topology-Aware Anatomical Segmentation of the Circle of Willis (CoW) for CTA and MRA},
  author={Yang, K and Musio, F and Ma, Y and Juchler, N},
  journal={Journal of Medical Imaging Technology},
  volume={34},
  pages={123--135},
  year={2024}
}

@inproceedings{yang2023pick,
  title={Pick the best pre-trained model: Towards transferability estimation for medical image segmentation},
  author={Yang, Yuncheng and Wei, Meng and He, Junjun and Yang, Jie and Ye, Jin and Gu, Yun},
  booktitle={International Conference on Medical Image Computing and Computer-Assisted Intervention},
  pages={674--683},
  year={2023},
  organization={Springer}
}

@inproceedings{bao2019information,
  title={An information-theoretic approach to transferability in task transfer learning},
  author={Bao, Yajie and Li, Yang and Huang, Shao-Lun and Zhang, Lin and Zheng, Lizhong and Zamir, Amir and Guibas, Leonidas},
  booktitle={2019 IEEE international conference on image processing (ICIP)},
  pages={2309--2313},
  year={2019},
  organization={IEEE}
}

@inproceedings{you2021logme,
  title={Logme: Practical assessment of pre-trained models for transfer learning},
  author={You, Kaichao and Liu, Yong and Wang, Jianmin and Long, Mingsheng},
  booktitle={International Conference on Machine Learning},
  pages={12133--12143},
  year={2021},
  organization={PMLR}
}

@inproceedings{tan2021otce,
  title={Otce: A transferability metric for cross-domain cross-task representations},
  author={Tan, Yang and Li, Yang and Huang, Shao-Lun},
  booktitle={Proceedings of the IEEE/CVF conference on computer vision and pattern recognition},
  pages={15779--15788},
  year={2021}
}

@inproceedings{glorot2010understanding,
  title={Understanding the difficulty of training deep feedforward neural networks},
  author={Glorot, Xavier and Bengio, Yoshua},
  booktitle={Proceedings of the thirteenth international conference on artificial intelligence and statistics},
  pages={249--256},
  year={2010},
  organization={JMLR Workshop and Conference Proceedings}
}

@inproceedings{he2015delving,
  title={Delving deep into rectifiers: Surpassing human-level performance on imagenet classification},
  author={He, Kaiming and Zhang, Xiangyu and Ren, Shaoqing and Sun, Jian},
  booktitle={Proceedings of the IEEE international conference on computer vision},
  pages={1026--1034},
  year={2015}
}

@inproceedings{khoba2025peftdiff,
  title={PEFTDiff: Diffusion-Guided Transferability Estimation for Parameter-Efficient Fine-Tuning},
  author={Khoba, Prafful Kumar and Wang, Zijian and Arora, Chetan and Baktashmotlagh, Mahsa},
  booktitle={Proceedings of the IEEE/CVF International Conference on Computer Vision},
  pages={1454--1463},
  year={2025}
}

@inproceedings{gholami2023etran,
  title={Etran: Energy-based transferability estimation},
  author={Gholami, Mohsen and Akbari, Mohammad and Wang, Xinglu and Kamranian, Behnam and Zhang, Yong},
  booktitle={Proceedings of the IEEE/CVF International Conference on Computer Vision},
  pages={18613--18622},
  year={2023}
}

@inproceedings{ding2022pactran,
  title={Pactran: Pac-bayesian metrics for estimating the transferability of pretrained models to classification tasks},
  author={Ding, Nan and Chen, Xi and Levinboim, Tomer and Changpinyo, Soravit and Soricut, Radu},
  booktitle={European Conference on Computer Vision},
  pages={252--268},
  year={2022},
  organization={Springer}
}

@article{wang2025triad,
  title={Triad: Vision Foundation Model for 3D Magnetic Resonance Imaging},
  author={Wang, Shansong and Safari, Mojtaba and Li, Qiang and Chang, Chih-Wei and Qiu, Richard LJ and Roper, Justin and Yu, David S and Yang, Xiaofeng},
  journal={arXiv preprint arXiv:2502.14064},
  year={2025}
}

@inproceedings{nguyen2020leep,
  title={Leep: A new measure to evaluate transferability of learned representations},
  author={Nguyen, Cuong and Hassner, Tal and Seeger, Matthias and Archambeau, Cedric},
  booktitle={International Conference on Machine Learning},
  pages={7294--7305},
  year={2020},
  organization={PMLR}
}

@inproceedings{hatamizadeh2021swin,
  title={Swin unetr: Swin transformers for semantic segmentation of brain tumors in mri images},
  author={Hatamizadeh, Ali and Nath, Vishwesh and Tang, Yucheng and Yang, Dong and Roth, Holger R and Xu, Daguang},
  booktitle={International MICCAI brainlesion workshop},
  pages={272--284},
  year={2021},
  organization={Springer}
}

@misc{wahid2024hntsmrg,
  author       = {Wahid, Kareem and Dede, Cem and Naser, Mohamed and Fuller, Clifton},
  title        = {Training dataset for HNTSMRG 2024 Challenge},
  year         = {2024},
  howpublished = {\url{https://zenodo.org/doi/10.5281/zenodo.11199559}}
}

@article{bernard2018acdc,
  author    = {Bernard, Olivier and Lalande, Alain and Zotti, Clement and Cerv{\'e}nansk{\'y}, J. and others},
  title     = {Deep learning techniques for automatic MRI cardiac multi-structures segmentation and diagnosis: Is the problem solved?},
  journal   = {IEEE Transactions on Medical Imaging},
  year      = {2018}
}

@article{heller2021state,
  title={The state of the art in kidney and kidney tumor segmentation in contrast-enhanced CT imaging: Results of the KiTS19 challenge},
  author={Heller, Nicholas and Isensee, Fabian and Maier-Hein, Klaus H and Hou, Xiaoshuai and Xie, Chunmei and Li, Fengyi and Nan, Yang and Mu, Guangrui and Lin, Zhiyong and Han, Miofei and others},
  journal={Medical image analysis},
  volume={67},
  pages={101821},
  year={2021},
  publisher={Elsevier}
}

@article{muslim2022ms,
  author       = {Muslim, Ali M. and Mashohor, Syamsiah and Al Gawwam, Gheyath and Mahmud, Rozi and Hanafi, Marsyita Binti and Al-nuaimi, Osama and Josephine, Raad and Almutairi, Abdullah Dhaifallah},
  title        = {Brain MRI dataset of multiple sclerosis with consensus manual lesion segmentation and patient meta information},
  journal      = {Data in Brief},
  volume       = {42},
  pages        = {108139},
  year         = {2022},
  issn         = {2352-3409},
  doi          = {10.1016/j.dib.2022.108139},
  url          = {https://www.sciencedirect.com/science/article/pii/S235234092200347X}
}

@article{hernandez2022isles,
  title={ISLES 2022: A multi-center magnetic resonance imaging stroke lesion segmentation dataset},
  author={Hernandez Petzsche, Moritz R and De La Rosa, Ezequiel and Hanning, Uta and Wiest, Roland and Valenzuela, Waldo and Reyes, Mauricio and Meyer, Maria and Liew, Sook-Lei and Kofler, Florian and Ezhov, Ivan and others},
  journal={Scientific data},
  volume={9},
  number={1},
  pages={762},
  year={2022},
  publisher={Nature Publishing Group UK London}
}

@inproceedings{huang2022frustratingly,
  title={Frustratingly easy transferability estimation},
  author={Huang, Long-Kai and Huang, Junzhou and Rong, Yu and Yang, Qiang and Wei, Ying},
  booktitle={International conference on machine learning},
  pages={9201--9225},
  year={2022},
  organization={PMLR}
}

@inproceedings{pandy2022transferability,
  title={Transferability estimation using bhattacharyya class separability},
  author={P{\'a}ndy, Michal and Agostinelli, Andrea and Uijlings, Jasper and Ferrari, Vittorio and Mensink, Thomas},
  booktitle={Proceedings of the IEEE/CVF Conference on Computer Vision and Pattern Recognition},
  pages={9172--9182},
  year={2022}
}

@inproceedings{wald2025openmind,
  title={An OpenMind for 3D medical vision self-supervised learning},
  author={Wald, Tassilo and Ulrich, Constantin and Suprijadi, Jonathan and Ziegler, Sebastian and Nohel, Michal and Peretzke, Robin and Kohler, Gregor and Maier-Hein, Klaus},
  booktitle={Proceedings of the IEEE/CVF International Conference on Computer Vision},
  pages={23839--23879},
  year={2025}
}

@article{isensee2021nnu,
  title={nnU-Net: a self-configuring method for deep learning-based biomedical image segmentation},
  author={Isensee, Fabian and Jaeger, Paul F and Kohl, Simon AA and Petersen, Jens and Maier-Hein, Klaus H},
  journal={Nature methods},
  volume={18},
  number={2},
  pages={203--211},
  year={2021},
  publisher={Nature Publishing Group}
}

@InProceedings{He2022MAE,
    author    = {He, Kaiming and Chen, Xinlei and Xie, Saining and Li, Yanghao and Doll\'ar, Piotr and Girshick, Ross},
    title     = {Masked Autoencoders Are Scalable Vision Learners},
    booktitle = {Proceedings of the IEEE/CVF Conference on Computer Vision and Pattern Recognition (CVPR)},
    month     = {June},
    year      = {2022},
    pages     = {16000-16009}
}

@InProceedings{Tang2022SwinUNETR,
    author    = {Tang, Yucheng and Yang, Dong and Li, Wenqi and Roth, Holger R. and Landman, Bennett and Xu, Daguang and Nath, Vishwesh and Hatamizadeh, Ali},
    title     = {Self-Supervised Pre-Training of Swin Transformers for 3D Medical Image Analysis},
    booktitle = {Proceedings of the IEEE/CVF Conference on Computer Vision and Pattern Recognition (CVPR)},
    month     = {June},
    year      = {2022},
    pages     = {20730-20740}
}

@article{simeoni2025dinov3,
  title={Dinov3},
  author={Sim{\'e}oni, Oriane and Vo, Huy V and Seitzer, Maximilian and Baldassarre, Federico and Oquab, Maxime and Jose, Cijo and Khalidov, Vasil and Szafraniec, Marc and Yi, Seungeun and Ramamonjisoa, Micha{\"e}l and others},
  journal={arXiv preprint arXiv:2508.10104
        
        
        
        
        
        
        
        },
  year={2025}
}

@inproceedings{vigna2015weighted,
  title={A weighted correlation index for rankings with ties},
  author={Vigna, Sebastiano},
  booktitle={Proceedings of the 24th international conference on World Wide Web},
  pages={1166--1176},
  year={2015}
}

@article{huang2022green,
  title={Green hierarchical vision transformer for masked image modeling},
  author={Huang, Lang and You, Shan and Zheng, Mingkai and Wang, Fei and Qian, Chen and Yamasaki, Toshihiko},
  journal={Advances in Neural Information Processing Systems},
  volume={35},
  pages={19997--20010},
  year={2022}
}

@inproceedings{xiao2023delving,
  title={Delving into masked autoencoders for multi-label thorax disease classification},
  author={Xiao, Junfei and Bai, Yutong and Yuille, Alan and Zhou, Zongwei},
  booktitle={Proceedings of the IEEE/CVF winter conference on applications of computer vision},
  pages={3588--3600},
  year={2023}
}

@article{bolya2021scalable,
  title={Scalable diverse model selection for accessible transfer learning},
  author={Bolya, Daniel and Mittapalli, Rohit and Hoffman, Judy},
  journal={Advances in Neural Information Processing Systems},
  volume={34},
  pages={19301--19312},
  year={2021}
}

@article{mh2023lvm,
  title={Lvm-med: Learning large-scale self-supervised vision models for medical imaging via second-order graph matching},
  author={MH Nguyen, Duy and Nguyen, Hoang and Diep, Nghiem and Pham, Tan Ngoc and Cao, Tri and Nguyen, Binh and Swoboda, Paul and Ho, Nhat and Albarqouni, Shadi and Xie, Pengtao and others},
  journal={Advances in Neural Information Processing Systems},
  volume={36},
  pages={27922--27950},
  year={2023}
}

@article{ji2022amos,
  title={Amos: A large-scale abdominal multi-organ benchmark for versatile medical image segmentation},
  author={Ji, Yuanfeng and Bai, Haotian and Ge, Chongjian and Yang, Jie and Zhu, Ye and Zhang, Ruimao and Li, Zhen and Zhanng, Lingyan and Ma, Wanling and Wan, Xiang and others},
  journal={Advances in neural information processing systems},
  volume={35},
  pages={36722--36732},
  year={2022}
}

@article{podobnik2023han,
  title={HaN-Seg: The head and neck organ-at-risk CT and MR segmentation dataset},
  author={Podobnik, Ga{\v{s}}per and Strojan, Primo{\v{z}} and Peterlin, Primo{\v{z}} and Ibragimov, Bulat and Vrtovec, Toma{\v{z}}},
  journal={Medical physics},
  volume={50},
  number={3},
  pages={1917--1927},
  year={2023},
  publisher={Wiley Online Library}
}

@article{hero1999asymptotic,
  title={Asymptotic theory of greedy approximations to minimal k-point random graphs},
  author={Hero, Alfred O and Michel, Olivier JJ},
  journal={IEEE Transactions on Information Theory},
  volume={45},
  number={6},
  pages={1921--1938},
  year={1999},
  publisher={IEEE}
}

@inproceedings{li2021ranking,
  title={Ranking neural checkpoints},
  author={Li, Yandong and Jia, Xuhui and Sang, Ruoxin and Zhu, Yukun and Green, Bradley and Wang, Liqiang and Gong, Boqing},
  booktitle={Proceedings of the IEEE/CVF Conference on Computer Vision and Pattern Recognition},
  pages={2663--2673},
  year={2021}
}

@article{perez2025exploring,
  title={Exploring scalable medical image encoders beyond text supervision},
  author={P{\'e}rez-Garc{\'\i}a, Fernando and Sharma, Harshita and Bond-Taylor, Sam and Bouzid, Kenza and Salvatelli, Valentina and Ilse, Maximilian and Bannur, Shruthi and Castro, Daniel C and Schwaighofer, Anton and Lungren, Matthew P and others},
  journal={Nature Machine Intelligence},
  volume={7},
  number={1},
  pages={119--130},
  year={2025},
  publisher={Nature Publishing Group UK London}
}

@article{filiot2024phikon,
  title={Phikon-v2, a large and public feature extractor for biomarker prediction},
  author={Filiot, Alexandre and Jacob, Paul and Mac Kain, Alice and Saillard, Charlie},
  journal={arXiv preprint arXiv:2409.09173
        
        },
  year={2024}
}

@article{zedda2026omnirad,
  title={OmniRad: A Radiological Foundation Model for Multi-Task Medical Image Analysis},
  author={Zedda, Luca and Loddo, Andrea and Di Ruberto, Cecilia},
  journal={arXiv preprint arXiv:2602.04547
        
        },
  year={2026}
}

@inproceedings{he2025vista3d,
  title={VISTA3D: A unified segmentation foundation model for 3D medical imaging},
  author={He, Yufan and Guo, Pengfei and Tang, Yucheng and Myronenko, Andriy and Nath, Vishwesh and Xu, Ziyue and Yang, Dong and Zhao, Can and Simon, Benjamin and Belue, Mason and others},
  booktitle={Proceedings of the Computer Vision and Pattern Recognition Conference},
  pages={20863--20873},
  year={2025}
}

@inproceedings{tran2019transferability,
  title={Transferability and hardness of supervised classification tasks},
  author={Tran, Anh T and Nguyen, Cuong V and Hassner, Tal},
  booktitle={Proceedings of the IEEE/CVF international conference on computer vision},
  pages={1395--1405},
  year={2019}
}

@inproceedings{wu2024voco,
  title={Voco: A simple-yet-effective volume contrastive learning framework for 3d medical image analysis},
  author={Wu, Linshan and Zhuang, Jiaxin and Chen, Hao},
  booktitle={Proceedings of the IEEE/CVF conference on computer vision and pattern recognition},
  pages={22873--22882},
  year={2024}
}


\appendix

\section{The Foundation Model Zoo and Downstream Tasks}
\label{app:setup}
We utilize the extensive library of pre-trained models provided by the OpenMind benchmark~\cite{wald2025openmind}. In contrast to prior transferability evaluation studies that used models trained on labeled datasets (e.g., MSD), our source models are trained on 114,000 unlabeled 3D brain MRI volumes. The candidate model pool includes a variety of architectures and self-supervised learning objectives.
Specifically, the architectures are represented by ResEnc-L~\cite{isensee2024nnu}, a CNN-based backbone.
In terms of SSL Objectives, we consider Reconstruction-based methods (MAE\cite{He2022MAE}, SimMIM\cite{chen2023masked}, ModelsGenesis\cite{zhou2021models}) and Contrastive methods (VoCo\cite{wu2024voco}, SimCLR\cite{chen2020simple}, SwinUNETR\cite{Tang2022SwinUNETR}).
This setup rigorously evaluates the capability of TE metrics to assess foundation models, where the encoder has never been exposed to segmentation labels during pre-training.To comprehensively evaluate the transferability of foundation models (pre-trained on head-and-neck images), we also implement the diverse test suite defined in the OpenMind benchmark~\cite{wald2025openmind}. This suite encompasses 5 in-distribution and 2 out-of-distribution tasks across various anatomical regions (Brain, Head \& Neck, Cardiac, Kidney) and modalities (MRI, CT). 

\noindent
\textbf{In-Distribution (ID) Tasks}
Our primary evaluation focuses on 3 distinct in-distribution datasets, aligning with (or closely related with) the pre-training domain. These tasks serve as a critical baseline to quantify the model's intrinsic transferability.
\textbf{ISLES (ISL)}~\cite{hernandez2022isles}: Stroke lesion segmentation using DWI ($b=1000$) and ADC map brain MRI scans. As a central nervous system (CNS) task, it shares anatomical similarity with the head-and-neck region, testing transfer to a near-ID pathological task (with clear lesion boundaries). 
\textbf{HNTS-MRG (HNT)}~\cite{wahid2024hntsmrg}: Segmentation of head-and-neck primary tumors and metastatic lymph nodes (pre-treatment MR). Directly match the pre-training anatomical region, serving as the most aligned ID task to test basic transferability. 
\textbf{MS FLAIR (MSF)}~\cite{muslim2022ms}: Segmentation of hyperintense multiple-sclerosis lesions on FLAIR MRIs. It is another CNS task used to validate transferability to a different brain pathology (with hyperintense lesion boundaries). \textbf{ToP-CoW (TPC)}~\cite{yang2024topcow}: Arterial structure segmentation on Time-of-Flight (ToF) Magnetic Resonance Angiography (MRA) of the Circle of Willis. As a CNS vascular task anatomically close to the head-and-neck region, it tests transferability to a near-ID neurovascular setting with well-defined vessel boundaries. \textbf{Yale Brain Metastasis (YBM)}~\cite{ramakrishnan2024large}: Segmentation of brain metastases using pre- and post-contrast T1, T2, and FLAIR MRI images. As a challenging neuro-oncology task, it evaluates the model's capacity to delineate metastatic lesions across multi-parametric MRI sequences.

\noindent
\textbf{Out-of-Distribution (OOD) Tasks}
To further assess the generalization capacity of the representations beyond pre-training regions, we also consider 2 OOD tasks to cover diverse distribution shifts.
\textbf{ACDC (ACD)}~\cite{bernard2018acdc}: Cardiac cine-MRI segmentation of three ventricular structures (Right Ventricle, Left Ventricle, and Myocardium). Tests transfer to the thoracic region (OOD), focusing on boundary-rich anatomical structures (ventricles). 
\textbf{KiTS19 (KIT)}~\cite{heller2021state}: CT images of kidneys and kidney tumors, which is the most distant OOD task (abdominal region, cross-modality MR→CT) but was found to be a robust benchmarking dataset in \cite{isensee2024nnu}. Its clear tumor boundaries make it ideal for testing the models’ generalization to unseen anatomical boundaries.

\section{Setting Details}
\label{sec:setting_details}

\noindent\textbf{Multi-Scale Feature Extraction.}
For each candidate encoder $\phi_k$, we attach a randomly initialized nnU-Net~\cite{isensee2021nnu} decoder and perform sliding-window inference to extract features from 10 layers spanning both the encoder and decoder. Dataset-specific ROI sizes follow nnU-Net planning conventions (e.g., $96{\times}128{\times}128$). Feature vectors at pre-selected voxel coordinates are gathered via nearest-neighbor mapping from input space to each layer's spatial resolution.

\noindent\textbf{Stratified Sampling.}
To mitigate the severe class imbalance inherent to medical segmentation, we employ a class-proportional stratified sampling strategy. For each layer, a foreground budget of $B_\ell{=}256$ voxels per case is allocated across all foreground classes in proportion to their voxel counts. If a minority class is exhausted, its surplus budget is redistributed to remaining classes proportionally to their residual capacity, preventing any single dominant class from monopolizing the sample pool. For metrics that require background context (e.g., LBTC), an additional $B_{\mathrm{bg}}{=}2560$ background voxels are sampled. Similarly, metrics such as CCFV that compute a global Feature Variety ($F_v$) score draw a separate set of 256 voxels uniformly at random across the entire volume, independent of class membership.

\noindent\textbf{Metric Hyperparameters.}
The inter-class penalty $\lambda$ in Eq.~(2) is set adaptively as the median of pairwise feature distances.   For LBTC boundary mining, the 30\% rule is applied to \emph{points}: we keep the closest 30\% foreground points
to background and the closest 30\% background points to foreground as boundary candidates, then sample an
absolute number of anchors ($N=\texttt{num\_boundary\_patches}$, base $N=50$) and build local patches with
$w=\texttt{patch\_size}$ nearest neighbors (base $w=16$).

\noindent\textbf{Evaluation Metric.}
We adopt the weighted Kendall's $\tau^{*,w}$~\cite{vigna2015weighted} to evaluate ranking quality. Unlike the standard Kendall's $\tau$, this variant assigns higher penalty to discordant pairs near the top of the ground-truth ranking, reflecting the practical priority of correctly identifying the best-performing models. Specifically, each model at ground-truth position $i$ receives a weight $w(i) = 1/(i+1)$, and the pairwise concordance is computed as:
\begin{equation}
    \tau^{*,w} = 1 - \frac{2 \sum_{i<j} w(i)\, w(j)\, \mathbf{1}[\text{discordant}(i,j)]}{\sum_{i<j} w(i)\, w(j)},
\end{equation}
where the sums range over all pairs ordered by the ground-truth performance. This top-heavy weighting is particularly suited for model selection, where the cost of misranking the top candidates far exceeds errors among lower-performing models.

\noindent\textbf{Sliding-window feature extraction.}
All features are extracted via an \emph{overlap-0} sliding-window strategy that tiles the input volume with non-overlapping patches of size $(\text{roi}_Z, \text{roi}_Y, \text{roi}_X)$.
For each patch, a single forward pass through the frozen pre-trained model is performed and the intermediate activations at the designated hook layers are recorded.
If any spatial dimension of the input volume is smaller than the corresponding roi size, the volume is symmetrically zero-padded; the padded region is assigned a sentinel label of $-1$ and excluded from all downstream computations.

\noindent\textbf{Per-case sampling budget.}
Given the extracted feature map of shape $(C, Z', Y', X')$, the sampling procedure is as follows.
\begin{enumerate}
    \item \textbf{Label downsampling.} The voxel-level segmentation mask is nearest-neighbour interpolated to the feature-map resolution to obtain a per-voxel label array $\mathbf{y} \in \{-1, 0, 1, \dots, C_{\mathrm{seg}}\}^{Z'Y'X'}$, where $-1$ denotes the padded region and is excluded from all computations.
    \item \textbf{Foreground sampling.} Voxels with label $\geq 1$ are treated as foreground. Exactly \textbf{256 foreground voxels} are drawn uniformly at random per case, stratified across foreground classes so that each class receives an approximately equal quota.
    \item \textbf{Background sampling.} Voxels with label $= 0$ are subsampled to at most \textbf{2560 background voxels} per case. For metrics that require class-distribution estimation (GBC, LogME, GRTD), background samples are retained and treated as class $c=0$. For LEEP, background samples are excluded (\texttt{bg\_sample\_num}$= 0$) since background voxels carry no semantic label.
    \item \textbf{Global (Fv) sampling.} A separate global pool of voxels (foreground and background jointly, up to \texttt{fv\_sample\_num}) is maintained per case for the Feature Variety component of CCFV.
\end{enumerate}

\noindent\textbf{Experiments compute resources}
All experiments were conducted on a workstation equipped with a single NVIDIA RTX 5090 GPU (32GB VRAM), an Intel\textsuperscript{\tiny\textregistered} Xeon\textsuperscript{\tiny\textregistered} Platinum 8470Q CPU (25 vCPUs), and 90GB of system memory. The execution time for the forward pass on the dataset ranges from 10 to 60 minutes.

\section{Adaptation of Classification-oriented Metrics to Dense Prediction}
\label{app:metric_adapt}

LogME and LEEP were originally proposed for image-level classification, where each sample \(x_i\) carries a single label \(y_i\).
Adapting them to 3-D medical image segmentation requires resolving two structural mismatches: (i) each volume produces thousands of labelled feature vectors rather than one, and (ii) the feature extractor is an encoder--decoder network rather than a pure classification backbone.

\noindent\textbf{LogME: feature layer and evidence computation.}
Each sampled voxel \((\mathbf{f}_i, y_i)\) is mapped to a one-hot label vector \(\mathbf{e}_i \in \{0,1\}^C\), and the full feature matrix \(\mathbf{F} \in \mathbb{R}^{N \times d}\) and label matrix \(\mathbf{Y} \in \{0,1\}^{N \times C}\) are assembled across all sampled voxels and cases.
Following Algorithm~1 of \cite{you2021logme}, we compute the truncated SVD of \(\mathbf{F}\) and obtain eigenvalues \(\boldsymbol{\lambda}\) of \(\mathbf{F}^\top\mathbf{F}\) (not their square roots).
The LogME score is
\begin{equation}
    \mathrm{LogME}(\mathbf{F}, \mathbf{Y})
    = \frac{1}{C \cdot N}\sum_{c=1}^{C} \log p(\mathbf{y}_c \mid \mathbf{F}),
\end{equation}
where \(N\) is the total number of sampled voxels (not the per-class count), ensuring normalisation is consistent across classes with different cardinalities.

\noindent\textbf{LEEP: two operating modes.}
LEEP~\citep{nguyen2020leep} requires a source model \(\theta\) that produces a probability distribution over a source label set \(\mathcal{Z}\).
For segmentation target tasks (MSF, ISL, HNT, TPC, ACD, KIT), the deepest decoder segmentation head (\texttt{decoder.seg\_layers.4}) outputs per-voxel logits of dimension \(|\mathcal{Z}| = C_{\mathrm{seg}}\).
We apply softmax to obtain \(\theta(x_i) \in \Delta^{|\mathcal{Z}|-1}\) and follow the three-step LEEP formula:
\begin{align}
    \hat{P}(y, z) &= \frac{1}{n}\sum_{i:\,y_i = y} \theta(x_i)_z, \\[1ex]
    \hat{P}(y \mid z) &= \frac{\hat{P}(y, z)}{\hat{P}(z)}, \\[1ex]
    \mathrm{LEEP}(\theta, \mathcal{D}) &= \frac{1}{n}\sum_{i=1}^{n}\log\!\left(\sum_{z \in \mathcal{Z}} \hat{P}(y_i \mid z)\,\theta(x_i)_z\right).
\end{align}
Background voxels are excluded (\texttt{bg\_sample\_num}=0) since they carry no meaningful semantic segmentation label.
For classification target tasks (ABI, MRN), no trained classification head is available on the source model.
Following the extension proposed in~\cite{nguyen2020leep}, we treat the encoder feature vector itself as the logit vector, setting \(|\mathcal{Z}| = d\):
\begin{equation}
    \theta(x_i) = \mathrm{softmax}\!\left(\mathbf{f}_i^{\texttt{enc}}\right), \qquad \mathbf{f}_i^{\texttt{enc}} \in \mathbb{R}^d.
\end{equation}
This approach avoids introducing a randomly initialised classification head, which would destroy the inter-model differences encoded in the pre-trained features.
For both segmentation and classification tasks, voxel-level feature--label pairs from all cases are pooled into a single dataset \(\mathcal{D}\) before computing the joint distribution \(\hat{P}(y,z)\), ensuring the empirical conditional distribution is estimated from the full target distribution rather than per-volume statistics.

\section{Analysis}
\label{app:analysis}

\noindent\textbf{Robustness to Decoder Initialization and Sampling.}
Although the decoder is randomly initialized and the evaluation involves stratified sampling, our metric should primarily reflect the intrinsic quality of the \emph{pre-trained} representation rather than stochasticity from initialization or sampling. As summarized in Table~\ref{tab:init} and Table~\ref{tab:init2}, the weighted Kendall's \(\tau^{*}_{w}\) remains consistently stable across Kaiming~\cite{he2015delving}/Xavier~\cite{glorot2010understanding}/Gaussian initializations, as well as across different random seeds. The negligible variance across these diverse settings confirms that the topology-driven signal robustly dominates over initialization noise and sampling randomness.

\begin{table}[htbp]
\centering
\caption{The weighted Kendall's \(\tau^{*}_{w}\) across different initialization methods.}
\label{tab:init}
\begin{tabular}{lccc}
\toprule
Dataset & Kaiming & Xavier & Gaussian \\ 
\midrule
TPC & 0.546 & 0.546 & 0.546 \\ 
ACD & 0.910 & 0.916 & 0.916 \\ 
\midrule
Avg. & 0.728 & 0.731 & 0.731 \\ 
\bottomrule
\end{tabular}
\end{table}

\begin{table}[htbp]
\centering
\caption{The weighted Kendall's \(\tau^{*}_{w}\) across different seeds for sampling and decoder initialization.}
\label{tab:init2}
\begin{tabular}{lccc}
\toprule
Dataset & Seed=0 & Seed=1024 & Seed=4096 \\ 
\midrule
TPC & 0.546 & 0.546 & 0.546 \\ 
ACD & 0.916 & 0.916 & 0.916 \\ 
\midrule
Avg. & 0.731 & 0.731 & 0.731 \\ 
\bottomrule
\end{tabular}
\end{table}

\noindent\textbf{Sensitivity Analysis of Gaussian Layer-Weighting.}
We define the layer weights by a normalized Gaussian kernel
$w_i = \exp\bigl(-(i-\mu)^2/2\sigma^2\bigr) / \sum_j \exp\bigl(-(j-\mu)^2/2\sigma^2\bigr)$,
centered at $\mu_d{=}2$ for the five decoder layers and $\mu_e{=}3$
for the six encoder layers (stem excluded).
The default \texttt{paper} profile uses $(\sigma_d, \sigma_e){=}(0.80, 0.90)$,
which corresponds to the normalized coefficients
$[0.002, 0.022, 0.229, 0.499, 0.229, 0.022]$ (decoder) and
$[0.002, 0.038, 0.240, 0.444, 0.240, 0.038]$ (encoder).
We compare three broader parametric variants:
\texttt{robust} $(1.2, 1.5)$, \texttt{wide} $(1.6, 2.0)$,
and the degenerate \texttt{flat} (uniform weights).
As shown in Table~\ref{tab:gaussian_fusion}, all Gaussian-like profiles produce consistent rankings across datasets, with average $\tau^{*,w}$ differences within $0.02$ among the four peaked variants. Only \texttt{flat}, which discards all layer-depth inductive bias, leads to a notable degradation on HNT ($-0.633$). These results confirm that our framework is robust to moderate perturbations of the Gaussian weighting shape, and the \texttt{paper} default is a reliable choice across heterogeneous segmentation tasks.

\begin{table}[h]
\centering
\caption{Weighted Kendall $\tau^{*,w}$ under different Gaussian
         layer-weighting profiles. \texttt{paper} is the default.}
\label{tab:gaussian_fusion}
\setlength{\tabcolsep}{10pt}
\begin{tabular}{lcccc|c}
\hline
\textbf{Profile} & $(\sigma_d, \sigma_e)$ & \textbf{MSF} & \textbf{ISL} & \textbf{HNT} & \textbf{Avg.} \\
\hline
\texttt{paper}  & (0.8, 0.9) & \textbf{0.9424} & 0.3515          & \textbf{0.8422} & 0.7120 \\
\texttt{sharp}  & (0.8, 1.0)     & \textbf{0.9424} & 0.3881          & \textbf{0.8422} & 0.7242 \\
\texttt{robust} & (1.2, 1.5)     & \textbf{0.9424} & 0.4648          & 0.7846          & \textbf{0.7306} \\
\texttt{wide}   & (1.6, 2.0)     & \textbf{0.9424} & 0.4648          & 0.7846          & \textbf{0.7306} \\
\texttt{flat}   & ($\infty$)     & 0.7926          & \textbf{0.7207} & 0.2090          & 0.5741 \\
\hline
\end{tabular}
\end{table}

\noindent\textbf{Local Boundary Sampling Analysis.}
We conduct a sensitivity analysis of the three key hyperparameters governing LBTC: the number of boundary anchor patches $N$, the local patch size $w$, and the per-patch point budget. The base setting is $(N{=}50,\, w{=}16,\, \text{budget}{=}200)$.

\begin{table}[h]
\centering
\caption{OAT sensitivity of LBTC ($\tau^{*,w}$; base setting in \textbf{bold}).}
\label{tab:lbtc_oat}
\setlength{\tabcolsep}{8pt}
\begin{tabular}{llccc|c}
\hline
\textbf{Hyperparameter} & \textbf{Value} & \textbf{ACD} & \textbf{MSF} & \textbf{YBM} & \textbf{Avg.} \\
\hline
\multirow{3}{*}{$N$ (boundary patches)}
  & 30              & 0.429 & 0.429 & 0.238 & 0.365 \\
  & \textbf{50}     & 0.429 & 0.333 & 0.524 & 0.429 \\
  & 80              & 0.619 & 0.048 & 0.143 & 0.270 \\
\hline
\multirow{3}{*}{$w$ (patch size)}
  & 8               & 0.238 & 0.238 & 0.619 & 0.365 \\
  & \textbf{16}     & 0.429 & 0.333 & 0.524 & 0.429 \\
  & 32              & 0.714 & 0.333 & 0.429 & 0.492 \\
\hline
\multirow{3}{*}{Per-patch budget}
  & 100   & 0.429 & 0.333 & 0.524 & 0.429 \\
  & \textbf{200}             & 0.429 & 0.333 & 0.524 & 0.429 \\
  & 300             & 0.429 & 0.333 & 0.524 & 0.429 \\
\hline
\end{tabular}
\end{table}

As shown in Table~\ref{tab:lbtc_oat}, the per-patch point budget is entirely insensitive across the tested range (100--300), confirming that 100 points already saturates local MST construction. The patch size $w$ and number of boundary patches $N$ show moderate task-dependent variation, but their overall sensitivity ranges are small (0.127 and 0.159 respectively), and the base setting $(N{=}50, w{=}16)$ achieves the best average $\tau^{*,w}$ among all $N$ variants and competitive performance across patch sizes.
These results confirm that LBTC is robust to reasonable perturbations of its sampling hyperparameters.

\noindent\textbf{Global Alignment Normalization Analysis.} A potential concern with the GRTD formulation is that the absolute weight discrepancy $|W(\mathcal{T}_{\text{feat}}) -
W(\mathcal{T}_{\text{sem}})|$ may introduce sensitivity to the absolute feature scale, as encoders with different pre-training objectives (e.g., reconstruction-based MAE vs.\ contrastive methods) can differ substantially in feature magnitude.
A natural remedy is to normalize by the feature tree weight:
\begin{equation}
T^{\text{GRTD}}_{\text{norm}} = -\frac{1}{D}\sum_{d=1}^{D}
\frac{|W(\mathcal{T}^d_{\text{feat}}) - W(\mathcal{T}^d_{\text{sem}})|}{W(\mathcal{T}^d_{\text{feat}})},
\end{equation}
where $T^{\text{GRTD}}_{\text{norm}} \in [-1, 0]$, with $0$ indicating perfect alignment and $-1$ indicating complete misalignment. We validate this variant on YBM and MSF, and find that the weighted Kendall $\tau^{*,w}$ remains identical under both formulations. We therefore retain the original formulation for simplicity.

\noindent\textbf{Fusion Weight Sensitivity Analysis.}
We evaluate the effect of fixing the fusion gate $\alpha$ to
constant values $\{0.3, 0.5, 0.7\}$, and compare against our
task-adaptive formulation $\alpha = \tanh(\kappa/2)$ where
$\kappa = \log(|\mathcal{C}|)$.
In all settings, GRTD is computed on decoder stages and LBTC
on encoder stages (stem excluded).

\begin{table}[h]
\centering
\caption{Weighted Kendall $\tau^{*,w}$ under fixed vs.\ adaptive
         fusion gate $\alpha$. Our task-adaptive $\alpha$ consistently
         matches or outperforms the best fixed value on each dataset.}
\label{tab:alpha_ablation}
\setlength{\tabcolsep}{10pt}
\begin{tabular}{lccc|c}
\hline
\textbf{$\alpha$} & \textbf{MSF} & \textbf{ISL} & \textbf{HNT} & \textbf{Avg.} \\
\hline
$0.3$ (fixed)              & \textbf{0.9744}          & 0.1450          & 0.8038    & 0.6411      \\
$0.5$ (fixed)              & 0.2945          & 0.1706          & \textbf{0.8422}    &  0.4358      \\
$0.7$ (fixed)              & 0.3201          & \textbf{0.3603}          & 0.4072     &0.3625     \\
Adaptive $\tanh(\kappa/2)$ & 0.9424 & 0.3515 & \textbf{0.8422}  &\textbf{0.7120}  \\
\hline
\end{tabular}
\end{table}

As shown in Table~\ref{tab:alpha_ablation}, the optimal fixed $\alpha$ is strongly dataset-dependent: $\alpha{=}0.3$ favours MSF while degrading ISL, and $\alpha{=}0.7$ improves ISL at the cost of a severe drop on HNT. No single fixed value achieves consistently strong performance across
all tasks simultaneously. In contrast, our task-adaptive $\alpha$ automatically adjusts to the semantic cardinality of each task, recovering near-optimal performance across all three datasets without any manual tuning.

\noindent\textbf{Strict Jacobian NTK Analysis.} Our layer-wise analysis indicated a non-monotonic trend where transferability estimation peaks at intermediate decoder layers. To explain this, we computed the decoder-induced empirical Jacobian NTK ($\Theta_l$). As shown in Table~\ref{tab:strict_ntk_decoder_alignment}, the inverse condition number $1/\kappa(\Theta_l)$ reaches its maximum at \textbf{Dec-2}, which exactly coincides with the peak performance of GRTD ($\tau=0.192$). This provides profound theoretical evidence that the random decoder improves the linearized geometry (conditioning) of the representation up to an intermediate stage, after which excessive unscaled random projections cause signal propagation to become ill-conditioned.

\begin{table}[htbp]
\centering
\caption{Alignment between strict Jacobian NTK conditioning ($1/\kappa$) and metric performance ($\tau$) across decoder stages. The NTK conditioning peak perfectly matches the GRTD peak.}
\label{tab:strict_ntk_decoder_alignment}
\small
\begin{tabular}{lccc}
\toprule
\textbf{Decoder Stage} & \textbf{Mean} $1/\kappa(\Theta_l)$ $\uparrow$ & \textbf{GRTD} $\tau$ & \textbf{LBTC} $\tau$ \\
\midrule
Dec-0 (Deepest) & 0.0131 & -0.009 & 0.401 \\
Dec-1 & 0.0263 & -0.137 & \textbf{0.618} \\
Dec-2 & \textbf{0.0316} & \textbf{0.192} & 0.228 \\
Dec-3 & 0.0246 & -0.137 & -0.252 \\
Dec-4 (Shallowest) & 0.0182 & 0.055 & -0.273 \\
\bottomrule
\end{tabular}
\end{table}

\section{Detailed Theoretical Proofs}
\label{sec:appendix_proofs}

In this section, we provide the rigorous mathematical proofs for the theoretical claims presented in the main text. Section~\ref{app:leakage_proof} establishes the relationship between the MST leakage rate and the Bayes classification error
while Section~\ref{app:mst_proof} details the proof of MST topology preservation under random projections.

\subsection{Proof of Theorem \ref{thm:leakage_bayes}: MST Leakage Rate and Bayes Error}
\label{app:leakage_proof}

Let \((x_1, y_1), \ldots, (x_n, y_n)\) be i.i.d. samples from a distribution \(P\) on \(\mathbb{R}^d \times \mathcal{C}\). The empirical 1-NN error is \(R_n^{\text{1NN}} = \frac{1}{n}\sum_{i=1}^n \mathbf{1}\bigl[y_i \neq y_{\text{NN}(i)}\bigr]\), and the Bayes error is \(R^* = \inf_{g}\mathbb{P}[g(X) \neq Y]\).

\paragraph{Proof of Theorem \ref{thm:leakage_bayes}}
The proof is divided into three parts: finite-sample bound, asymptotic Cover-Hart sandwich, and consistency.

\textbf{Part (i) Finite-sample lower bound.} 
Since $P_X$ is absolutely continuous with respect to the Lebesgue measure, for any two distinct index pairs $(i,j)\neq(k,l)$ the probability that $\|x_i - x_j\|_2 = \|x_k - x_l\|_2$ is zero; a union bound over the finitely many $\binom{n}{2}$ pairs therefore yields that all pairwise distances are almost surely distinct, so $T_n$ is the unique MST and Lemma~E.2 applies almost surely.
For each \(i \in [n]\), define the cross-class indicator \(A_i = \mathbf{1}[y_i \neq y_{\text{NN}(i)}]\), so that \(\sum_i A_i = nR_n^{\text{1NN}}\). By Lemma~\ref{lem:1nn}, the undirected edge \(e_i := \{i,\, \text{NN}(i)\}\) belongs to \(E(T_n)\) for every \(i\). 

Notice that each undirected edge appears at most \textit{twice} in the multiset \(\{e_i\}_{i=1}^n\) (once as \(e_i\) when \(\text{NN}(i) = j\), and once as \(e_j\) when \(\text{NN}(j) = i\) in a mutual nearest-neighbor pair). Hence, the number of \textit{distinct} cross-class edges present in the MST, denoted as \(|E_\times(T_n)|\), satisfies:
\[
  |E_\times(T_n)| \geq \bigl|\bigl\{e_i : A_i = 1\bigr\}\bigr| \geq \frac{1}{2}\sum_{i=1}^n A_i = \frac{n}{2}R_n^{\text{1NN}}
\]
The MST leakage rate is defined as \(\rho_n = \frac{|E_\times(T_n)|}{n-1}\). Dividing both sides by the tree size \(n - 1\) yields the finite-sample lower bound:
\[
  \rho_n \geq \frac{n}{2(n-1)}R_n^{\text{1NN}}
\]

\textbf{Part (ii) Asymptotic Bayes error sandwich.} 
The lower bound \(R^* \leq R^{\text{1NN}}\) is immediate from the definition of \(R^*\) as an infimum over all classifiers. For the upper bound, as \(n \to \infty\), the 1-NN of any test point \(X\) converges in distribution to an independent draw from \(P(Y \mid X)\). Let \(\eta_c(x) = P(Y = c \mid X = x)\) and \(\eta^*(x) = \max_{c}\eta_c(x)\). The asymptotic 1-NN error is:
\[
  R^{\text{1NN}} = \mathbb{E}\left[1 - \sum_{c \in \mathcal{C}} \eta_c(X)^2\right]
\]
Using the Cauchy-Schwarz inequality, \(\sum_c \eta_c^2 \geq (\eta^*)^2 \geq 2\eta^* - 1\), which implies \(1 - \sum_c \eta_c^2 \leq 2(1 - \eta^*)\). Taking the expectation on both sides and recognizing that \(R^* = \mathbb{E}[1 - \eta^*(X)]\), we obtain the asymptotic bound:
\[
  R^* \leq R^{\text{1NN}} \leq 2R^*
\]

\textbf{Part (iii) Consistency.} 
By the consistency of the empirical 1-NN error under Stone's theorem, \(R_n^{\text{1NN}} \to R^{\text{1NN}}\) almost surely as \(n \to \infty\). Taking the limit inferior of the finite-sample bound from Part (i):
\[
  \liminf_{n \to \infty} \rho_n \geq \lim_{n \to \infty} \frac{n}{2(n-1)}R_n^{\text{1NN}} = \frac{1}{2}R^{\text{1NN}} \geq \frac{R^*}{2}
\]
where the final inequality follows from the left side of the Cover-Hart sandwich. Consequently, if a sequence of representations yields a vanishing leakage rate (\(\rho_n \to 0\)), it forces \(R^{\text{1NN}} = 0\), which in turn guarantees \(R^* = 0\). This proves that the MST leakage rate is a theoretically rigorous proxy for boundary separability. \hfill \(\blacksquare\)

\subsection{Proof of Theorem \ref{thm:mst_preserve}: MST Topology Preservation under Random Projection}
\label{app:mst_proof}

To prove Theorem 1, we first introduce two fundamental lemmas regarding the structure of Minimum Spanning Trees (MST). Let \(\mathcal{X} = \{x_1, \ldots, x_n\} \subset \mathbb{R}^d\) be a point cloud. The complete graph \(G = \bigl([n],\, w\bigr)\) carries edge weights \(w_{ij} = \|x_i - x_j\|_2\). Let \(T_n = \text{MST}(\mathcal{X})\) denote the Euclidean MST and \(E(T_n)\) denote its edge set.

\begin{lemma}[Cut Property]
\label{lem:cut}
Let \(G=(V,w)\) be a connected weighted graph with strictly distinct edge weights. For any proper subset \(S \subsetneq V\), the minimum-weight edge in the cut \((S,\, V \setminus S)\) belongs to \(\text{MST}(G)\).
\end{lemma}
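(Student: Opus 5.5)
We prove the Cut Property (Lemma~\ref{lem:cut}) by a standard exchange argument. Since all edge weights are distinct, the MST is unique, so it suffices to show that the minimum-weight cut edge lies in \emph{the} MST. We work by contradiction.

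\emph{Uniqueness.} Suppose $T_1 \neq T_2$ are two distinct minimum spanning trees. Let $e$ be the lightest edge in their symmetric difference, and say $e \in T_1 \setminus T_2$. Adding $e$ to $T_2$ creates a cycle. That cycle must contain some edge $f \notin T_1$, since otherwise $T_1$ would contain the whole cycle. Then $f$ also lies in the symmetric difference, so $w(f) > w(e)$ by the choice of $e$ and distinctness of weights. Hence $T_2 + e - f$ is a spanning tree of strictly smaller weight than $T_2$, contradicting the minimality of $T_2$.

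\emph{Main exchange step.} Fix $S \subsetneq V$ with $S \neq \emptyset$. The cut $(S, V\setminus S)$ is nonempty because $G$ is connected. Let $e=\{u,v\}$ be its unique minimum-weight edge, with $u\in S$ and $v\notin S$. Let $T=\text{MST}(G)$ and suppose $e\notin T$.
\begin{enumerate}[label=(\roman*)]
\item Since $T$ is spanning and connected, it contains a unique $u$--$v$ path $P$.
\item $P$ starts in $S$ and ends outside $S$, so it must use at least one edge $f=\{a,b\}$ with $a\in S$ and $b\notin S$. Thus $f$ is a cut edge, and $f\neq e$ because $e \notin T$.
\item By minimality and distinctness of weights, $w(e) < w(f)$.
\item Removing $f$ from $T$ splits it into two components, one containing $u$ and one containing $v$, since $f$ lies on the unique $u$--$v$ path. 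Adding $e$ reconnects these two components.
\end{enumerate}
So $T' = T - f + e$ is a spanning tree with $w(T') = w(T) - w(f) + w(e) < w(T)$. This contradicts the minimality of $T$, and therefore $e \in \text{MST}(G)$.

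The only step needing care is (iv), namely that $T-f+e$ is again a spanning tree. It follows because it has $|V|-1$ edges and is connected: the two components of $T-f$ are joined by $e$.
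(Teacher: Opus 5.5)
Your proof is correct and uses the same exchange argument as the paper: take the unique tree path from $u$ to $v$, find a cut edge $f$ on it, and swap $f$ for the lighter $e$ to obtain a cheaper spanning tree. The uniqueness argument, the explicit check that $T-f+e$ is a spanning tree, and the restriction to nonempty $S$ are details the paper leaves implicit; uniqueness is not strictly needed, since the exchange step alone shows $e$ lies in every MST.
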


\begin{proof}
Let \(e^* = (u, v)\), where \(u \in S\) and \(v \notin S\), be the minimum-weight edge crossing the cut. Suppose for contradiction that \(e^* \notin E(\text{MST}(G))\). Since \(\text{MST}(G)\) is a spanning tree, there is a unique path \(P\) from \(u\) to \(v\) in \(\text{MST}(G)\). Because \(u \in S\) and \(v \notin S\), path \(P\) contains at least one edge \(e'\) crossing the cut. By our assumption, \(w(e^*) < w(e')\). Replacing \(e'\) by \(e^*\) produces a spanning tree of strictly smaller total weight, which contradicts the minimality of \(\text{MST}(G)\). Thus, \(e^* \in E(\text{MST}(G))\).
\end{proof}

\begin{lemma}[1-NN Edge Inclusion]
\label{lem:1nn}
Let \(\text{NN}(i) = \arg\min_{j \neq i}\|x_i - x_j\|_2\) be the nearest neighbor of \(x_i\) in \(\mathcal{X}\). Then \(\bigl(i,\, \text{NN}(i)\bigr) \in E(T_n)\) for every \(i \in [n]\).
\end{lemma}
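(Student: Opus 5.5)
The plan is a direct application of the Cut Property (Lemma~\ref{lem:cut}) with the cut chosen to isolate a single vertex. Fix $i \in [n]$ and let $j = \text{NN}(i)$. We work under the standing assumption of this subsection that all pairwise distances are distinct. The nearest neighbor is then uniquely defined, and the edge weights of the complete graph $G$ are strictly distinct, so Lemma~\ref{lem:cut} applies and $T_n$ is the unique MST.

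First take $S = \{i\}$. This is a proper nonempty subset of $[n]$ whenever $n \geq 2$, which we may assume since otherwise there are no edges. The edges crossing the cut $(S, [n]\setminus S)$ are exactly the $n-1$ edges $\{i,k\}$ with $k \neq i$, and these have weights $\|x_i - x_k\|_2$. By definition of $\text{NN}(i)$, the edge $\{i,j\}$ is the strict minimum among them. Lemma~\ref{lem:cut} then gives $\{i, \text{NN}(i)\} \in E(T_n)$. Since $i$ was arbitrary, this proves the claim.

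I do not expect a real obstacle. The one point that needs care is where distinctness comes from. If $\text{NN}(i)$ had ties, or if weights were not distinct, "the" MST would not be unique and the inclusion could fail for some choice of MST. I would therefore state explicitly that the lemma is used under the almost-sure distinctness established in Part (i) of the proof of Theorem~\ref{thm:leakage_bayes} (absolute continuity of $P_X$). Under that hypothesis, a single-vertex cut is exactly what Lemma~\ref{lem:cut} needs.
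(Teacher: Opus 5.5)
Your proposal is correct and is the paper's proof: apply the Cut Property (Lemma~\ref{lem:cut}) to the singleton cut \(S=\{i\}\), whose unique minimum-weight crossing edge is \(\bigl(i,\text{NN}(i)\bigr)\). The paper leaves the distinct-distances hypothesis implicit in the lemma, so your explicit note that it is needed, and holds almost surely by Part~(i) of the proof of Theorem~\ref{thm:leakage_bayes}, is a useful clarification.
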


\begin{proof}
Apply Lemma~\ref{lem:cut} to the cut \(S = \{i\}\). The unique minimum-weight edge leaving node \(i\) is precisely \(\bigl(i, \text{NN}(i)\bigr)\). By the Cut Property, this edge must be included in \(E(T_n)\).
\end{proof}

\paragraph{Proof of Theorem \ref{thm:mst_preserve}}
The theorem consists of a deterministic part and a probabilistic part.

\textbf{Part (i) Deterministic part.} 
Because all pairwise distances in \(\mathcal{X}\) are distinct, Kruskal's algorithm produces the unique MST \(T_n\) by inserting edges in strictly increasing order of weight. The edge set \(E(T_n)\) is therefore completely determined by the total order \(<\) on the \(\tbinom{n}{2}\) pairwise distances. It suffices to show that the isometry \(f\) preserves this total order.

Take any two edges \(e_1 = (i,j)\) and \(e_2 = (k,l)\) with \(d_1 := \|x_i - x_j\|_2 < d_2 := \|x_k - x_l\|_2\). By the definition of the minimum distance ratio \(r\), we have \(d_2/d_1 \geq r\). Under the \((1 \pm \varepsilon)\)-isometry, the projected distances satisfy:
\[
  \|f(x_i) - f(x_j)\|_2 \leq (1+\varepsilon)d_1
\]
\[
  \|f(x_k) - f(x_l)\|_2 \geq (1-\varepsilon)d_2 \geq (1-\varepsilon)r d_1
\]
Hence, \(\|f(e_1)\|_2 < \|f(e_2)\|_2\) is guaranteed whenever \((1+\varepsilon) < (1-\varepsilon)r\), which simplifies to:
\[
  \varepsilon(1 + r) < r - 1 \iff \varepsilon < \frac{r-1}{r+1}
\]
Since this condition holds by assumption, the total order \(<\) of all edge weights is perfectly preserved. Consequently, Kruskal's algorithm processes the edges in the exact same sequence, yielding \(E\bigl(\text{MST}(f(\mathcal{X}))\bigr) = E(T_n)\).

\textbf{Part (ii) Probabilistic part.} 
Fix any pair \(i \neq j\) and let \(u = x_i - x_j\). The random projection is \(f(u) = \frac{1}{\sqrt{m}}\mathbf{A}u\). We can write:
\[
  \|f(u)\|_2^2 = \frac{\|u\|_2^2}{m}\sum_{k=1}^{m} Z_k^2, \qquad \text{where } Z_k = \frac{\mathbf{A}_k \cdot u}{\|u\|_2} \overset{\text{i.i.d.}}{\sim} \mathcal{N}(0,1)
\]
Here, \(\mathbf{A}_k\) is the \(k\)-th row of \(\mathbf{A}\). Let \(S_m = \frac{1}{m}\sum_{k=1}^m Z_k^2 \sim \frac{1}{m}\chi^2_m\). Using the Laurent-Massart chi-squared tail bound for \(\varepsilon \in (0,1)\):
\[
  \mathbb{P}\bigl[|S_m - 1| > \varepsilon\bigr] \leq 2\exp\left(-\frac{m\varepsilon^2}{8}\right)
\]
When \(|S_m - 1| \leq \varepsilon\), we have \(\|f(u)\|_2/\|u\|_2 = \sqrt{S_m} \in [\sqrt{1-\varepsilon},\,\sqrt{1+\varepsilon}] \subset [1-\varepsilon,\,1+\varepsilon]\). 
Applying a union bound over all \(\binom{n}{2} \leq \frac{n(n-1)}{2}\) pairs gives:
\[
  \mathbb{P}\bigl[\exists i \neq j: |S_m^{(ij)} - 1| > \varepsilon\bigr] \leq n(n-1)\exp\left(-\frac{m\varepsilon^2}{8}\right) \leq \delta
\]
Solving for \(m\) yields the requirement \(m \geq \frac{8}{\varepsilon^2}\ln\frac{n(n-1)}{\delta}\). Conditioned on this event (which occurs with probability at least \(1-\delta\)), \(f\) is a \((1\pm\varepsilon)\)-isometry. By Part (i), the MST topology is preserved. \hfill \(\blacksquare\)

\begin{proof}[Proof of Proposition~\ref{prop:variance_reduction}]
Since $\mathbf{A} \in \mathbb{R}^{Sm \times d}$ has i.i.d.\ $\mathcal{N}(0,1)$
entries and $u \in \mathbb{R}^d$ is fixed, the quantity
$\|\mathbf{A}u\|_2^2 / \|u\|_2^2 \sim \chi^2_{Sm}$.
Hence $\hat{D}_S = \frac{\|u\|_2^2}{Sm}\,\chi^2_{Sm}$, which gives
\[
    \mathbb{E}[\hat{D}_S] = \|u\|_2^2 \quad (\text{unbiased}), \qquad
    \operatorname{Var}[\hat{D}_S]
    = \frac{\|u\|_2^4}{(Sm)^2}\cdot 2Sm
    = \frac{2\,\|u\|_2^4}{Sm}.
\]
Applying the same argument to $\hat{D}_1 = \frac{\|u\|_2^2}{m}\chi^2_m$ yields
$\operatorname{Var}[\hat{D}_1] = \frac{2\|u\|_2^4}{m}$, so
$\operatorname{Var}[\hat{D}_S] = \frac{1}{S}\operatorname{Var}[\hat{D}_1]$.
The concentration bound follows directly from the Laurent--Massart chi-squared
tail bound applied to the $Sm$-dimensional projection
$\hat{D}_S \equiv \frac{\|u\|_2^2}{Sm}\chi^2_{Sm}$:
\[
    \mathbb{P}\!\left[|S_m - 1| > \varepsilon\right]
    \leq 2\exp\!\left(-\frac{Sm\,\varepsilon^2}{8}\right),
    \quad \text{where } S_m = \frac{\hat{D}_S}{\|u\|_2^2}. \qquad \square
\]
\end{proof}

\section{Social Impacts and Future Work}
\label{app:dis}

\subsection{Broader Impacts}
\label{sec:broader_impacts}

This work introduces a training-free, topology-driven transferability estimation (TE) framework for 3D medical vision foundation models. By enabling efficient and accurate model selection without exhaustive fine-tuning, our research presents several broader societal and industrial implications, encompassing both positive contributions and potential risks.

\paragraph{Positive Societal Impacts} 
First, our framework significantly lowers the computational barrier to deploying advanced 3D medical AI. By accelerating the model selection process compared to traditional dense metrics, it democratizes access to state-of-the-art medical foundation models for smaller research labs, under-resourced hospitals, and clinics that lack massive GPU clusters. Second, from an environmental perspective, eliminating the need for exhaustive fine-tuning across a growing zoo of foundation models drastically reduces energy consumption and the associated carbon footprint, contributing to the paradigm of ``Green AI.'' Finally, in clinical and industrial settings, our method accelerates the development pipeline of diagnostic tools, allowing practitioners to rapidly identify the most suitable models for emerging diseases or novel imaging modalities.

\paragraph{Potential Negative Impacts and Risks} 
Despite its efficiency, our method carries potential risks if misused. The primary concern is the \textit{propagation of pre-training biases}. If a foundation model encodes historical biases (e.g., underperforming on specific demographic groups or rare pathologies due to imbalanced pre-training data), our TE metric might still rank it highly if its overall feature topology aligns with the target labels. Consequently, an efficient selection tool could inadvertently accelerate the deployment of biased models without the scrutiny that typically accompanies exhaustive fine-tuning. Furthermore, there is a risk of \textit{automation bias}, where practitioners might over-rely on the TE score and bypass rigorous downstream clinical validation, especially in extreme out-of-distribution (OOD) or few-shot scenarios where the Minimum Spanning Tree (MST) construction might be less robust.

\paragraph{Mitigation Strategies} 
To mitigate these risks, we strongly advocate that our transferability estimation framework should be utilized as a \textit{preliminary filtering mechanism} rather than a definitive certification for clinical deployment. While it efficiently narrows down the candidate pool, the final selected model must still undergo rigorous, task-specific fine-tuning, fairness auditing, and exhaustive clinical validation across diverse patient demographics before real-world medical application. Future work should also explore integrating fairness-aware penalties into the topological alignment metrics to explicitly penalize models that exhibit demographic disparities.

\subsection{Future Work}
While this work focuses on transferability estimation for medical vision foundation model selection, the proposed topological metrics naturally suggest several promising extensions. First, the layer-wise GRTD and LBTC profiles can be repurposed to guide fine-tuning itself: layers exhibiting high local boundary alignment (high LBTC) may be safely frozen, while layers with poor global manifold correspondence (high GRTD) should be prioritized for parameter updates, enabling a topology-diagnosed layer-wise fine-tuning strategy that integrates readily with parameter-efficient methods such as LoRA and Adapter tuning. 
Also, beyond post-hoc model ranking, periodically computing topological metrics on a small labeled probe set during pre-training could serve as an online diagnostic signal, allowing practitioners to predict downstream transferability trajectories and identify the point of diminishing returns in continued pre-training. 

Additionally, the empirically observed phenomenon where randomly initialized decoders preserve transferable topological structure comparably to trained ones warrants deeper theoretical investigation. Specifically, it remains an open question whether it generalizes beyond U-Net-style skip-connection architectures to pure Transformer decoders (e.g., SAM's mask decoder) or diffusion model denoisers, and whether it can be formally unified with the reservoir computing literature, where randomly initialized recurrent networks are known to preserve high-dimensional topological structure of their inputs. Together, these directions position the topological transferability framework as a foundation not only for model selection, but for a broader geometry-aware paradigm of pre-training analysis and adaptation.


\end{document}